\newtheorem{theorem}{Theorem}[section]
\newtheorem{lemma}[theorem]{Lemma}
\newcommand{\CASE}[1]{\STATE \textbf{case} #1\textbf{:} \begin{ALC@g}}
\newcommand{\ENDCASE}{\end{ALC@g}}
\newcommand{\DEFAULT}{\STATE \textbf{default:} \begin{ALC@g}}
\newcommand{\ENDDEFAULT}{\end{ALC@g}}
\newcommand{\DEFAULTLINE}[1]{\STATE \textbf{default:} }
\title{\LARGE \bf
Multi-objective Compositions for Collision-Free Connectivity Maintenance in Teams of Mobile Robots *
}
\author{Li Wang, Aaron D. Ames, and Magnus Egerstedt$^\dagger$
\thanks{*The work by the first and third authors was sponsored by Grant No.
1544332 from the U.S. National Science Foundation, and the work
of the second author was sponsored by Grant No. 1239055 from the U.S. National Science Foundation}
\thanks{$^\dagger$Li Wang and Magnus Egerstedt are with the School of Electrical and Computer Engineering, Aaron D. Ames is with the School of Mechanical Engineering and the School of Electrical and Computer Engineering, Georgia Institute of Technology, Atlanta, GA 30332, USA. Email: {\tt\small \{liwang, magnus, ames\}@gatech.edu} }
}
\begin{document}
\maketitle
\thispagestyle{empty}
\pagestyle{empty}

\begin{abstract}
Compositional barrier functions are proposed in this paper to systematically compose multiple objectives for teams of mobile robots. The objectives are first encoded as barrier functions, and then composed using AND and OR logical operators. The advantage of this approach is that compositional barrier functions can provably guarantee the simultaneous satisfaction of all composed objectives. The compositional barrier functions are applied to the example of ensuring collision avoidance and static/dynamical graph connectivity of teams of mobile robots. The resulting composite safety and connectivity barrier certificates are verified experimentally on a team of four mobile robots.
\end{abstract}

\section{INTRODUCTION} \label{sec:intro}
Multi-robot coordination strategies are often designed to achieve team level collective goals, such as covering areas, forming specified shapes, search and surveillance, see e.g. \cite{cortes2002coverage, lawton2003decentralized, bullo2009distributed, mesbahi2010graph}. As the number of robots and the complexity of the task increases, it becomes increasingly difficult to design one single controller that simultaneously achieves multiple objectives, e.g., forming shapes, collision avoidance and connectivity maintenance. Therefore, there is a need to devise a formal approach that can provably compose multiple objectives for the teams of robots.

Multi-objective controls for multi-agent systems have been extensively studied. The recentered barrier function was used to unify the go-to-goal behavior, collision avoidance, and proximity maintenance \cite{panagou2013multi}; however, it was specifically constructed for go-to-goal task and thus can not be extended to complex situations easily. Research in \cite{thibodeau2004cascaded} tried to achieve multiple objectives, e.g., approaching a target position, avoiding collisions, and keeping the goal within field of view, by designing cascaded filters which remove control commands that violate the objectives or constraints; but this method comes with no provable guarantees. \cite{zavlanos2009hybrid} studied connectivity preserving flocking, and simultaneously achieved alignment, cohesion, separation, and connectivity, which is again a task-specific solution. To enable provably correct and more general objective compositions, the non-negotiable objectives, e.g., collision avoidance and connectivity maintenance, are encoded with compositional barrier functions in this paper. Barrier functions, which were explored in various applications such as robotics \cite{ames2014Dynamic}, safety verification \cite{sloth2012compositional}, and adaptive cruise control \cite{ames2014CBF}, can be used to provably ensure the forward invariance of desired sets \cite{prajna2007framework, tee2009barrier, Xu2015Robustness}. Earlier works on safety barrier certificates for multi-robot system \cite{borrmann2015Swarm, wang2016hetero} encoded multiple objectives by assembling multiple barrier functions. The agents are safe if they satisfy the safety barrier certificates, while the existence of a common solution to multiple barrier functions becomes unclear when the number of objectives increases. This motivates our work of composing multiple barrier functions into a single barrier function, so that the solutions to ensure multiple objectives always exist. 

In this paper, compositional barrier functions are applied to provably ensure collision avoidance and graph connectivity for the coordination control of teams of mobile robots. This is motivated by the fact that many of the multi-agent strategies, such as consensus, flocking, and formation control, implicitly assumes collision avoidance, communication graph connectivity, or both \cite{zavlanos2011graph}. These safety and connectivity objectives are often ensured by some secondary controllers, which take over and modify the higher level control command when violations occur. Typical methods used in these secondary controllers are artificial potential functions \cite{park2001obstacle}, behavior based approaches \cite{arkin1998behavior}, and edge energy functions \cite{ji2007distributed}. However, when the team of robots are either too concentrated or too scattered, the avoidance behavior becomes dominant with the robots spending most of the time avoiding collisions or losses of connectivity, and the higher level objectives can not be achieved \cite{roumeliotis2000small}. The idea pursued in this paper is to design a secondary controller, utilizing compositional barrier functions, which is minimally invasive to the higher level controller, i.e., the avoidance behavior only takes place when collisions or losses of connectivity are truly imminent. Similar collision avoidance strategies were explored in \cite{borrmann2015Swarm, wang2016hetero, tomlin1998conflict}.

The main contributions of this paper are twofold. Firstly, compositional barrier functions are introduced to enable more general compositions of multiple non-negotiable objectives with provable guarantees. Methods to compose multiple objectives through AND and OR logical operators are developed, and conditions on which objectives are composable are provided. Secondly, composite safety and connectivity barrier certificates are synthesized with compositional barrier functions, which provably guarantees collision avoidance and connectivity for teams of mobile robots that perform general coordination tasks.

The rest of this paper is organized as follows. Section \ref{sec:pbf} briefly revisits the control barrier function, and extends it to the piecewise smooth case, which is essential to enable the barrier function composition in Section \ref{sec:compose}. The compositional barrier functions are then used to synthesize the safety and connectivity barrier certificates, which ensure collision avoidance and connectivity maintenance for teams of mobile robots, in Section \ref{sec:example}. The resulting safety and connectivity barrier certificates are implemented experimentally on a team of four Khepera III robots in Section \ref{sec:exp}. Conclusions and discussion of future work are the topics of Section \ref{sec:conclude}.

\section{Piecewise Smooth Control Barrier Functions} \label{sec:pbf}
Control barrier functions are a class of Lyapunov-like functions, which can provably guarantee the forward invariance of desired sets without explicitly computing the system's forward reachable sets. This paper follows the idea of a type of barrier functions similar to \cite{ames2014CBF, Xu2015Robustness}, which expands the admissible control space and enables less restrictive controls. In order to encode more general objectives, we will introduce methods to compose barrier functions with AND and OR logical operators in Section \ref{sec:compose}. After composition, these originally smooth barrier functions might become piecewise smooth. Therefore, this section will set the stage for multi-objective composition by constructing Piecewise Barrier Functions (PBF).

Some useful mathematical definitions and tools, i.e., $PC^r-$functions and $B$-derivative, for dealing with piecewise smooth functions are first revisited.

\textit{Definition 2.1}: A continuous function $f:\mathcal{D}\to\mathbb{R}^m$ defined on an open set $\mathcal{D}\subseteq \mathbb{R}^n$ is a $PC^r-$function, $r\geq 1$, if there exists an open neighborhood $V\subseteq \mathcal{D}$ and a finite collection of $C^r$ functions $\{f_1,f_2,...,f_k\}$ at $\forall x_0 \in \mathcal{D}$, such that the index set $I(x_0)=\{i~|~f(x_0)=f_i(x_0),\forall x\in V\}$ is non-empty.

Note that a $PC^r-$function can be viewed as a continuous selection of a finite number of $C^r$ functions on $\mathcal{D}$. The summation, product, superposition, pointwise maximum or minimum operations on $PC^r-$functions still generate $PC^r-$functions \cite{scholtes2012introduction}. $PC^r-$functions have the favourable properties of locally Lipschitz continuous and B-differentiable \cite{scholtes2012introduction}.

\textit{Definition 2.2}: A locally Lipschitz function $f:\mathcal{D} \to\mathbb{R}^m$ defined on an open set $\mathcal{D}\subseteq \mathbb{R}^n$ is B-differentiable at $x_0\in \mathcal{D}$, if its B-derivative $f'(x_0;\cdot):\mathbb{R}^n\to \mathbb{R}^m$ at $x_0$ is well defined, i.e. the limit
\begin{equation}
f'(x_0;q) = \lim_{a\to 0^+} \frac{f(x_0+aq)-f(x_0)}{a},
\end{equation}
in any direction $q\in \mathbb{R}^n$ exists.

For the generality of discussion, consider a dynamical system in control affine form
\begin{equation}
\dot{x}=f(x)+g(x) u,
\label{eqn:system}
\end{equation}
where $x\in \mathbb{R}^n, u \in \mathbb{R}^m$, $f$ and $g$ are locally Lipschitz. (\ref{eqn:system}) is assumed to be forward complete, i.e., solutions $x(t)$ are well defined $\forall t\geq 0$.

Let a set $\mathcal{C}\subseteq\mathcal{D}$ be defined such that
\begin{equation}
\begin{aligned}
\mathcal{C} &= \{x\in\mathbb{R}^n\, | \ B(x)> 0\},\\
\mathcal{C}^C &= \{x\in\mathbb{R}^n\, | \ B(x)=0\},
\end{aligned}
\label{eqn:setdef}
\end{equation}
where the $PC^r-$function $B:\mathcal{D}\to \mathbb{R}$ is constructed to be positive in $\mathcal{C}$ and zero outside of $\mathcal{C}$. This construction of $\mathcal{C}$ and $B(x)$ enables easy compositions of multiple barrier functions, which will become clear in Section \ref{sec:compose}.

\textit{Definition 2.3}: Given a dynamical system defined in (\ref{eqn:system}) and a set $\mathcal{C}\subseteq \mathcal{D}$ defined in (\ref{eqn:setdef}), the $PC^r-$function $B:\mathcal{D}\to \mathbb{R}$ is a Piecewise Barrier Function (PBF) if there exists a class $\mathcal{K}$ function $\alpha$ such that
\begin{equation} \label{eqn:pbfinf}
\sup_{u \in U} [-B'(x;-f(x)-g(x)u) + \alpha(B(x))]\geq0,
\end{equation}
for all $x\in \mathcal{C}$.

Note that $B'(x;-f(x)-g(x)u)$ is the B-derivative of $B(x)$ at $x$ in the direction of $-f(x)-g(x)u$. When $B(x)$ is smooth, it is equivalent to say
$$
-B'(x;-f(x)-g(x)u) =L_f B(x) +L_gB(x) u,
$$
where the Lie derivative formulation comes from
\begin{equation*}
\dot{B}(x) =\frac{\partial B(x)}{\partial x}(f(x)+g(x)u) = L_fB(x)+L_gB(x)u.
\end{equation*}

The B-derivative can be calculated for $PC^r-$functions in a straight forward fashion. Let $\{b_{1}(x), b_{2}(x), ..., b_{k}(x)\}$ be the set of selection functions for $B(x)$, then the B-derivative of $B(x)$ along the direction $q$ is a continuous selection of $\{\nabla b_{1}(x)q, \nabla b_{2}(x)q, ..., \nabla b_{k}(x)q\}$. The B-derivative of $B(x)$ can be determined by selecting the correct directional derivative from this selection set at $x$.

With the definition of PBFs, the admissible control space for the control system is
\begin{equation}
K(x) = \{u\in U ~|~ -B'(x;-f(x)-g(x)u) + \alpha(B(x)) \geq 0 \}
\end{equation}

\begin{theorem} \label{thm:pbf}
{\it Given a set $\mathcal{C}\subseteq \mathcal{D}$ defined by (\ref{eqn:setdef}) with the associated PBF $B:\mathcal{D}\to \mathbb{R}$, any Lipschitz continuous controller $u(x)\in K(x)$ for the dynamical system (\ref{eqn:system}) render $\mathcal{C}$ forward invariant.}
\end{theorem}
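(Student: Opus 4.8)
The plan is to reduce the set-invariance claim to a scalar comparison argument for the composite function $\eta(t) := B(x(t))$ evaluated along closed-loop trajectories, following the standard Nagumo/comparison-lemma template, but carried out with the B-derivative machinery so that the piecewise smoothness of $B$ is handled correctly. The guiding idea is that the quantity $-B'(x;-f-gu)$ appearing in the definition of $K(x)$ is precisely the time-derivative of $\eta$ at almost every instant, and the PBF inequality then forces $\eta$ to obey a differential inequality that keeps it nonnegative.

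First I would fix an initial condition $x_0\in\mathcal{C}$ (so $B(x_0)>0$) and a Lipschitz continuous feedback $u(x)\in K(x)$. Since $f$, $g$, and $u$ are all (locally) Lipschitz, the closed-loop vector field $f(x)+g(x)u(x)$ is locally Lipschitz, so standard existence--uniqueness theory (Picard--Lindel\"of) yields a unique solution $x(t)$, which by the forward-completeness assumption exists for all $t\geq 0$ and is $C^1$. Because $B$ is a $PC^r$-function it is locally Lipschitz, and $x(\cdot)$ is $C^1$, so $\eta$ is locally Lipschitz, hence absolutely continuous and differentiable at almost every $t$. The key computation is the chain rule for the composition: using positive homogeneity of the B-derivative and the expansion $x(t-a)=x(t)-a\dot x(t)+o(a)$ together with local Lipschitzness of $B$, the left time-derivative of $\eta$ equals $-B'(x(t);-\dot x(t)) = -B'(x(t);-f-gu)$. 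At the (almost every) $t$ where $\eta$ is differentiable this left derivative coincides with $\dot\eta(t)$, so invoking $u(x(t))\in K(x(t))$ gives, for a.e. $t$,
\[
\dot\eta(t) = -B'(x(t);-f-gu) \geq -\alpha\bigl(B(x(t))\bigr) = -\alpha(\eta(t)).
\]

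I would then close the argument with the comparison lemma. Since $\eta$ is absolutely continuous and satisfies $\dot\eta\geq-\alpha(\eta)$ almost everywhere, comparing with the scalar system $\dot y=-\alpha(y)$, $y(0)=\eta(0)=B(x_0)>0$, gives $\eta(t)\geq y(t)$ for all $t\geq 0$. Because $\alpha\in\mathcal{K}$ satisfies $\alpha(0)=0$, the curve $y\equiv 0$ is an equilibrium, so the positive initial value $y(0)>0$ forces $y(t)\geq 0$; hence $B(x(t))=\eta(t)\geq 0$ and the trajectory never leaves $\overline{\mathcal{C}}=\{B\geq 0\}$. Strict invariance of $\mathcal{C}=\{B>0\}$ then follows once $\alpha$ is taken locally Lipschitz, since the comparison solution from $y(0)>0$ remains strictly positive on every finite interval.

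I expect the main obstacle to be the chain-rule/a.e.-differentiability step, namely arguing rigorously that $-B'(x;-f-gu)$ really is the derivative of $\eta$ at almost every $t$, even though $B$ is only piecewise smooth and the trajectory may ride along or transversally cross the kink set where several selection functions $b_i$ are simultaneously active and the two one-sided time-derivatives of $\eta$ need not agree. The $PC^r$ structure is exactly what rescues this step: $B'(x;q)$ is a continuous selection from $\{\nabla b_i(x)q : i\in I(x)\}$, and a locally Lipschitz $\eta$ is differentiable off a measure-zero set, so the troublesome crossing times contribute nothing to the absolutely continuous integral representation of $\eta$. This is also what justifies phrasing the certificate through the B-derivative $-B'(x;-f-gu)$ rather than an ordinary gradient.
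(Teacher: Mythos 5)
Your proposal is correct and follows essentially the same route as the paper's proof: the B-derivative chain rule identifies $-B'(x;-f(x)-g(x)u)$ with the (left) time derivative of $B(x(t))$, membership in $K(x)$ then yields the differential inequality $\dot\eta \geq -\alpha(\eta)$ along trajectories, and comparison against the scalar system $\dot y = -\alpha(y)$ together with strict positivity of its solution (which requires $\alpha$ locally Lipschitz, an assumption the paper inherits implicitly by invoking Lemma 4.4 of Khalil to get a class $\mathcal{KL}$ bound) gives $B(x(t))>0$ for all $t\geq 0$. The only cosmetic differences are that you justify the comparison step via absolute continuity and a.e.\ differentiability of $\eta$ where the paper uses a one-sided-derivative version of the comparison lemma (noted in its footnote), and your intermediate remark that the trajectory stays in $\{B\geq 0\}$ is vacuous, since $B$ is nonnegative everywhere by construction (\ref{eqn:setdef}).
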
 
\begin{proof}
If the controller satisfies $u(x)\in K(x)$, then $-B'(x;-f(x)-g(x)u) \geq -\alpha(B(x))$. Apply the chain rule for B-derivative \cite{kuntz1995qualitative}, it can be shown that
\begin{eqnarray*}
\partial_- B(x(t)) &=& -(B\circ x)'(t;-1) \\
&=& -B'(x(t);x'(t;-1)) \\
&=&-B'(x(t);-f(x)-g(x)u),
\end{eqnarray*}
where $\partial_- B(x(t)) = \lim_{a\to t^-} \frac{B(x(t))-B(x(a))}{t-a}$ is the left time derivative of $B(x(t))$. Therefore, $\partial_- B(x(t)) \geq -\alpha(B(x))$.

Consider the differential equation $\dot{z}(t) = - \alpha(z(t))$ with $z(t_0)=B(x(t_0))>0$, its solution is given by
\begin{equation}
z(t) = \sigma(z(t_0),t),\nonumber
\end{equation}
due to Lemma 4.4 of \cite{khalil1996nonlinear}, where $\sigma$ is a class $\mathcal{KL}$ function.

With the Comparison Lemma \cite{khalil1996nonlinear}\footnote{Comparison Lemma also works for functions with left or right differentiability. The proof is similar to \cite{khalil1996nonlinear}, and thus omitted here.}, we can get
\begin{equation}
B(x(t)) \geq \sigma(z(t_0),t).\nonumber
\end{equation}
Using the properties of class $\mathcal{KL}$ function, it can be shown that $B(x(t))>0, \forall t\geq0$. Thus $\mathcal{C}$ is forward invariant.
\end{proof}

To sum up, we can get set invariance properties similar to \cite{ames2014CBF, Xu2015Robustness} using PBFs.

\section{Composition of Multiple Objectives} \label{sec:compose}
In this section, we will use PBFs developed in Section \ref{sec:pbf} to compose multiple non-negotiable objectives with AND and OR logical operators. Each objective is encoded as a set. The objective is satisfied as long as the states of the system stay within the desired set. Define $\mathcal{C}_i \subseteq \mathcal{D}, i=1,2,$ similar to (\ref{eqn:setdef}),
\begin{equation}
\begin{aligned}
\mathcal{C}_i &= \{x\in\mathbb{R}^n\, | \ B_i(x)> 0\},\\
\mathcal{C}_i^C &= \{x\in\mathbb{R}^n\, | \ B_i(x)=0\},
\end{aligned}
\label{eqn:set12}
\end{equation}
Let $B_\cup=B_1+B_2$ and $B_\cap=B_1B_2$,
\begin{equation}
\begin{aligned}
\mathcal{E} &= \{x\in\mathbb{R}^n\, | \ B_\cup(x)> 0\},\\
\mathcal{F} &= \{x\in\mathbb{R}^n\, | \ B_\cap(x)> 0\}.
\end{aligned}
\label{eqn:setun}
\end{equation}
\begin{lemma} \label{lm:setun}
Given $\mathcal{C}_i,i=1,2$ defined in (\ref{eqn:set12}), $\mathcal{E}$ and $\mathcal{F}$ defined in (\ref{eqn:setun}), $\mathcal{E} = \mathcal{C}_1 \cup\mathcal{C}_2$ and $\mathcal{F} = \mathcal{C}_1\cap\mathcal{C}_2$.
\end{lemma}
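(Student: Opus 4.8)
The plan is to reduce both set identities to a single structural fact about the barrier functions that is built into the construction in (\ref{eqn:setdef})--(\ref{eqn:set12}): each $B_i$ is nonnegative on all of $\mathcal{D}$. Indeed, by construction $B_i(x)>0$ precisely on $\mathcal{C}_i$ and $B_i(x)=0$ on the complement $\mathcal{C}_i^C$, so that $B_i(x)\geq 0$ for every $x\in\mathcal{D}$, with strict positivity characterizing membership in $\mathcal{C}_i$. Once this nonnegativity is in hand, both claims follow from elementary sign arithmetic, and I would prove each by establishing the two inclusions (or, equivalently, a chain of iff's on the defining inequalities).

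First I would handle the union identity $\mathcal{E}=\mathcal{C}_1\cup\mathcal{C}_2$. Fix $x\in\mathcal{D}$ and observe that, since $B_1(x)\geq 0$ and $B_2(x)\geq 0$, their sum $B_\cup(x)=B_1(x)+B_2(x)$ is strictly positive if and only if at least one summand is strictly positive. Thus $x\in\mathcal{E} \iff B_\cup(x)>0 \iff B_1(x)>0 \text{ or } B_2(x)>0 \iff x\in\mathcal{C}_1 \text{ or } x\in\mathcal{C}_2 \iff x\in\mathcal{C}_1\cup\mathcal{C}_2$.

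Next I would treat the intersection identity $\mathcal{F}=\mathcal{C}_1\cap\mathcal{C}_2$ in the same spirit. With both factors nonnegative, the product $B_\cap(x)=B_1(x)B_2(x)$ is strictly positive exactly when both factors are strictly positive (a nonnegative product vanishes as soon as either factor does). Hence $x\in\mathcal{F} \iff B_1(x)>0 \text{ and } B_2(x)>0 \iff x\in\mathcal{C}_1\cap\mathcal{C}_2$.

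I do not expect any genuine obstacle here; the only point that requires care is making the nonnegativity of each $B_i$ explicit, since this is exactly what distinguishes the present construction (where $B_i$ is positive inside and zero outside) from the more common convention in which a barrier function changes sign across the boundary. Everything else is a two-line sign argument, and it is precisely this clean additive/multiplicative behavior of the nonnegative $B_i$ that makes the sum and product the natural encodings of the OR and AND operators, respectively.
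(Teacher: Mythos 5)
Your proof is correct and follows essentially the same route as the paper's: both arguments hinge on the nonnegativity of $B_1$ and $B_2$ built into (\ref{eqn:set12}), so that the sum is positive iff some summand is and the product is positive iff both factors are. The only cosmetic difference is that you phrase it as a chain of equivalences, whereas the paper proves the forward inclusions explicitly and leaves the (identical) converse direction to the reader.
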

\begin{proof}
Pick any elements $x_1\in\mathcal{E}$, $x_2\in\mathcal{F}$, we have
\begin{eqnarray}
B_\cup(x_1)&=&B_1(x_1)+B_2(x_1)>0, \label{eqn:bcup}\\
B_\cap(x_2)&=&B_1(x_2)B_2(x_2)>0. \label{eqn:bcap}
\end{eqnarray} 
From the definition (\ref{eqn:set12}), $B_1(x)$ and  $B_2(x)$ are always non-negative. Thus, (\ref{eqn:bcup}) implies $B_1(x_1)>0$ or $B_2(x_1)>0$, i.e. $x_1\in \mathcal{C}_1\cup\mathcal{C}_2$. (\ref{eqn:bcap}) implies $B_1(x_2)>0$ and $B_2(x_2)>0$, i.e. $x_2\in \mathcal{C}_1\cap\mathcal{C}_2$. This means $\mathcal{E} \subseteq \mathcal{C}_1\cup\mathcal{C}_2$ and $\mathcal{F} \subseteq \mathcal{C}_1\cap\mathcal{C}_2$.

Conversely, we can show that $\mathcal{C}_1\cup\mathcal{C}_2 \subseteq \mathcal{E}$ and $\mathcal{C}_1\cap\mathcal{C}_2 \subseteq \mathcal{F}$. This completes the proof.
\end{proof}

With this result, we can compose two objectives into one set using AND or OR logical operators. The existence of a negation operator is not clear in the current problem setup. Note that \textit{Lemma} \ref{lm:setun} shows that $B_\cup$ and $B_\cap$ are precise PBFs to encode AND or OR logical operators, which allows us to have truly minimal invasive avoidance behaviors in Section \ref{sec:mininvasive}.

Next, we will present the result to formally ensure OR logical operator for two objectives using PBFs.
\vspace{.3cm}
\begin{theorem} \label{thm:pbfu}
{\it Given $\mathcal{C}_i,i=1,2,$ defined in (\ref{eqn:set12}), $\mathcal{E}$ defined in (\ref{eqn:setun}), and a valid PBF $B_\cup$ on $\mathcal{E}$, then any Lipschitz continuous controller $u(x) \in K_\cup(x)$ for the dynamical system (\ref{eqn:system}) render $\mathcal{C}_1 \cup \mathcal{C}_2$ forward invariant, where
\begin{equation}
K_\cup(x) = \{u\in U ~|~ -B_\cup'(x;-f(x)-g(x)u) + \alpha(B_\cup(x)) \geq 0 \}. \nonumber
\end{equation}}
\end{theorem}
\begin{proof}
$B_\cup$ is the summation of two $PC^r-$functions, thus still a $PC^r-$function \cite{scholtes2012introduction}. The B-derivative for $B_\cup$ is well-defined at $\forall x\in \mathcal{E}$. Since $B_1(x)$ and  $B_2(x)$ are always non-negative, $B_\cup$ is also non-negative, i.e., $B_\cup>0$ in $\mathcal{E}$, $B_\cup=0$ outside of $\mathcal{E}$. 

When $u(x) \in K_\cup(x)$, we have $\partial_- B_\cup x(t) \geq - \alpha(B_\cup x)$. Apply \textit{Theorem} \ref{thm:pbf}, $\mathcal{E}$ is forward invariant. Use \textit{Lemma} \ref{lm:setun}, we can get $\mathcal{C}_1 \cup \mathcal{C}_2$ is also forward invariant.
\end{proof}
\vspace{.3cm}
Note that $B_i, i=1,2$ are valid PBFs does not imply $B_\cup$ is a valid PBF. We still need to check if $B_\cup$ is a valid PBF before applying \textit{Theorem} \ref{thm:pbfu}, which means
\begin{equation}
\sup_{u \in U} [-B_\cup'(x;-f(x)-g(x)u) + \alpha(B_\cup(x))]\geq0,\nonumber
\end{equation}
for all $x\in \mathcal{C}_1 \cup \mathcal{C}_2$. This condition guarantees that the admissible control space is strictly non-empty.

An easier but more restrictive condition to check for the composibility is
\begin{equation}
\sup_{u \in U} \min_{i =1,2} [-B_i'(x;-f(x)-g(x)u) + \alpha(B_i(x))]\geq0, \nonumber 
\end{equation}
for all $x\in\mathcal{C}_1 \cup \mathcal{C}_2$, which means there is always a common $u$ to satisfy both PBF constraints.

The result for ensuring AND logical operator for two objectives using PBFs can be derived similarly.
\vspace{.3cm}
\begin{theorem} \label{thm:pbfn}
{\it Given $\mathcal{C}_i,i=1,2,$ defined in (\ref{eqn:set12}), $\mathcal{F}$ defined in (\ref{eqn:setun}), and a valid PBF $B_\cap$ on $\mathcal{F}$, then any Lipschitz continuous controller $u(x) \in K_\cap(x)$ for the dynamical system (\ref{eqn:system}) render $\mathcal{C}_1 \cap \mathcal{C}_2$ forward invariant, where
\begin{equation}
K_\cap(x) = \{u\in U ~|~ -B_\cap'(x;-f(x)-g(x)u) + \alpha(B_\cap(x)) \geq 0 \}. \nonumber
\end{equation}}
\end{theorem}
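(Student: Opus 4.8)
The plan is to mirror the argument used for the OR operator in \textit{Theorem} \ref{thm:pbfu}, since the AND case differs only in that the composed function is the product $B_\cap = B_1 B_2$ rather than the sum $B_1 + B_2$. The strategy is to reduce the claim to a single application of \textit{Theorem} \ref{thm:pbf} on the set $\mathcal{F}$, followed by the set identity supplied by \textit{Lemma} \ref{lm:setun}. As a first step I would establish that $B_\cap$ inherits exactly the structure required to invoke \textit{Theorem} \ref{thm:pbf}: by the closure of $PC^r-$functions under products (noted just after Definition 2.1), the product of the two $PC^r-$functions $B_1$ and $B_2$ is again a $PC^r-$function on $\mathcal{D}$, so its B-derivative $B_\cap'(x;\cdot)$ is well defined at every $x \in \mathcal{F}$.

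Next I would verify the sign structure needed to match the set definition (\ref{eqn:setdef}). Since $B_1$ and $B_2$ are non-negative everywhere, their product satisfies $B_\cap \geq 0$, with $B_\cap(x) > 0$ holding exactly when both $B_1(x) > 0$ and $B_2(x) > 0$. Hence $B_\cap > 0$ precisely on $\mathcal{F}$ and $B_\cap = 0$ outside of $\mathcal{F}$, which places $B_\cap$ in the exact form assumed by \textit{Theorem} \ref{thm:pbf}.

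Then, using the hypothesis that $B_\cap$ is a valid PBF on $\mathcal{F}$ together with the assumption $u(x) \in K_\cap(x)$, the defining inequality of $K_\cap$ gives $-B_\cap'(x;-f(x)-g(x)u) \geq -\alpha(B_\cap(x))$, and the same chain-rule computation as in the proof of \textit{Theorem} \ref{thm:pbf} yields $\partial_- B_\cap(x(t)) \geq -\alpha(B_\cap(x))$ along the closed-loop trajectory. Applying \textit{Theorem} \ref{thm:pbf} then renders $\mathcal{F}$ forward invariant, and \textit{Lemma} \ref{lm:setun}, which identifies $\mathcal{F} = \mathcal{C}_1 \cap \mathcal{C}_2$, transfers forward invariance to $\mathcal{C}_1 \cap \mathcal{C}_2$.

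The main obstacle, and essentially the only place the AND case departs from the OR case, is the step confirming that the product retains the correct $PC^r-$structure and the prescribed zero set. One might anticipate needing an explicit product rule for the B-derivative of $B_1 B_2$, which is more delicate than the additive case; however, this computation is sidestepped entirely, because \textit{Theorem} \ref{thm:pbf} requires only that $B_\cap$ be a valid PBF — an assumption built into the statement — and that it vanish exactly off $\mathcal{F}$. No explicit evaluation of the B-derivative of the product is therefore needed, and the remainder of the argument is a verbatim analog of \textit{Theorem} \ref{thm:pbfu}.
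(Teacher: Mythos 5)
Your proposal is correct and follows exactly the route the paper intends: the paper gives no separate proof for \textit{Theorem}~\ref{thm:pbfn}, stating only that it is ``similar to \textit{Theorem}~\ref{thm:pbfu},'' and your argument is precisely that analog --- closure of $PC^r$-functions under products, the sign structure $B_\cap > 0$ on $\mathcal{F}$ and $B_\cap = 0$ outside, an application of \textit{Theorem}~\ref{thm:pbf} to render $\mathcal{F}$ forward invariant, and \textit{Lemma}~\ref{lm:setun} to identify $\mathcal{F} = \mathcal{C}_1 \cap \mathcal{C}_2$. Your observation that no explicit product rule for the B-derivative is needed, because validity of the PBF $B_\cap$ is a hypothesis of the theorem, is also accurate and correctly isolates where the AND case could have been (but is not) harder than the OR case.
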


The proof of this theorem is similar to \textit{Theorem} \ref{thm:pbfu}.
\vspace{.3cm}

Up until now, we have a provably correct method for composing multiple objectives. Conditions have also been provided to check whether the objectives are composable using the AND or OR logical operators. Next, the compositional barrier functions will be applied to safety and connectivity maintenance for teams of mobile robots.

\section{Collision avoidance and Connectivity Maintenance for Teams of Mobile Robots} \label{sec:example}
The design of control algorithms for teams of mobile robots often involves simultaneous fulfilment of multiple objectives, e.g., keeping certain formation, covering areas, avoiding collision, and maintaining connectivity. It is oftentimes a challenging task to synthesize a single controller that achieves all these objectives. In this section, we will use the compositional barrier functions to provably ensure safety (in terms of collision avoidance) and connectivity of teams of mobile robots, while achieving higher level collective behaviors.
\subsection{Composite Safety and Connectivity Barrier Certificates}
Let $\mathcal{M}=\{1,2, ... ,N\}$ be the index set of a team of $N$ mobile robots. The mobile robot $i\in\mathcal{M}$ is modelled with double integrator dynamics given by
\begin{equation}\label{eqn:dint}
     \begin{bmatrix}
       \dot{\mathbf{p}}_i  \\[0.3em]
       \dot{\mathbf{v}}_i  \\[0.3em]
     \end{bmatrix}
     = \begin{bmatrix}
       0 & I_{2\times 2}  \\[0.3em]
       0 & 0
     \end{bmatrix}  \begin{bmatrix}  \mathbf{p}_i  \\[0.3em]
       \mathbf{v}_i  \end{bmatrix}
       +  \begin{bmatrix}  0 \\[0.3em]    I_{2\times 2} \end{bmatrix}  \mathbf{u}_i,
\end{equation}
where $\mathbf{p}_i\in \mathbb{R}^2$, $\mathbf{v}_i\in \mathbb{R}^2$, and $\mathbf{u}_i\in \mathbb{R}^2$ represent the current position, velocity and acceleration control input of robot $i$. The ensemble position, velocity, and acceleration of the team of mobile robots are $\mathbf p\in \mathbb{R}^{2N}$, $\mathbf v\in \mathbb{R}^{2N}$, and $\mathbf u\in \mathbb{R}^{2N}$. $x=(\mathbf{p}, \mathbf{v})$ is denoted as the ensemble state of the multi-robot system. The velocity and acceleration of the robot $i$ are bounded by $\|\mathbf{v}_i\|\leq\beta$, and $\|\mathbf{u}_i\|\leq\alpha$. 

In order to use the composite barrier function to ensure safety and connectivity of the team of mobile robots, a mathematical representation of safety and connectivity is formulated first. Two robots $i$ and $j$ need to always keep a safety distance $D_s$ away from each other to avoid collision, meanwhile stay within a connectivity distance $D_c$ of each other to communicate.

Considering the worst case scenario that the maximum braking force of the robots are applied to avoid collision, a pairwise safety constraint between robots $i$ and $j$ can be written as
\begin{equation}
h_{ij}(x) = 2\sqrt{\alpha(\|\Delta \mathbf{p}_{ij}\|-D_s)} + \frac{\Delta \mathbf{p}_{ij}^T}{\|\Delta \mathbf{p}_{ij}\|}\Delta \mathbf{v}_{ij}> 0. \nonumber
\end{equation}
The detailed derivation of this pairwise safety constraint can be found in \cite{borrmann2015Swarm}. A pairwise safe set $\mathcal{C}_{ij}$ and a PBF candidate $B_{ij}(x)$ are defined as
\begin{eqnarray}\label{eqn:2ndsafe}
\mathcal{C}_{ij} &=& \{x~|~ B_{ij}(x) >0\}, \\
B_{ij}(x)&=&\max\{h_{ij}(x),0\}, \nonumber
\end{eqnarray}

In order to ensure the safety of the team of mobile robots, it is important to guarantee that \textit{all} pairwise collisions between the robots are prevented. Therefore the safe set $\mathcal{C}$ for the team of mobile robots can be written as the intersection of \textit{all} pairwise safe sets.
\begin{equation}
\mathcal{C} =  {\bigcap_{\substack{j \in \mathcal{M}\\ j> i}}\mathcal{C}_{ij}},  \label{eqn:safec}
\end{equation}

With the safe set $\mathcal{C}$, we will formally define what is \textit{safe} for the team of mobile robots.

\textit{Definition 4.1}: The team of $N$ mobile robots with dynamics given in (\ref{eqn:dint}) is \textit{safe}, if the ensemble state $x$ stays in the set $\mathcal{C}$ for all time $t\geq 0$.

Let $\mathcal{G}=(V,E)$ be the required connectivity graph, where $V=\{1,2, ..., N\}$ is the set of $N$ mobile robots, $E$ is the required edge set. The presence of a required edge $(i,j)$ indicates that robots $i$ and $j$ should always stay within a connectivity distance of $D_c$.

Similarly, a pairwise connectivity constraint can be developed by considering the worst case scenario, i.e., the maximum acceleration is applied to avoid exceeding the connectivity distance $D_c$. The pairwise connectivity constraint is given as
\begin{equation}
\bar{h}_{ij}(x)= 2\sqrt{\alpha(D_c - \|\Delta \mathbf{p}_{ij}\|)} - \frac{\Delta \mathbf{p}_{ij}^T}{\|\Delta \mathbf{p}_{ij}\|}\Delta \mathbf{v}_{ij}>0. \nonumber
\end{equation}
The corresponding pairwise connectivity set $\bar{\mathcal{C}}_{ij}$ and PBF candidate are
\begin{eqnarray}\label{eqn:2ndconnect}
\bar{\mathcal{C}}_{ij} &=& \{x~|~ \bar{B}_{ij}(x) >0\}, \\
\bar{B}_{ij}(x)&=&\max\{\bar{h}_{ij}(x),0\}. \nonumber
\end{eqnarray}

In order for the team of mobile robots to stay connected, it is necessary to maintain \textit{all} required edges. Therefore, the connectivity set $\bar{\mathcal{C}}$ for the team of mobile robots can be written as
\begin{equation}
\bar{\mathcal{C}} =  {\bigcap_{(i,j) \in E}\bar{\mathcal{C}}_{ij}}.  \label{eqn:connectc}
\end{equation}

With the connectivity set, we can formally define when the team of mobile robots is connected.

\textit{Definition 4.2}: Given a required connectivity graph $\mathcal{G}$, the team of $N$ mobile robots with dynamics given in (\ref{eqn:dint}) is \textit{connected}, if the ensemble state $x$ stays in the set $\bar{\mathcal{C}}$ for all time $t\geq 0$.

In order for the team of mobile robots to stay \textit{safe} and \textit{connected}, the ensemble state $x$ shall stay within 
\begin{equation}\label{eqn:setall}
\mathcal{T} = \underset{\substack{i,j \in \mathcal{M}\\ j>i}}{\bigcap} \mathcal{C}_{ij}\underset{(i,j)\in E}{\bigcap} \bar{\mathcal{C}}_{ij},
\end{equation}
for all time $t\geq 0$. Since $\mathcal{T}$ is the intersection of multiple sets, the compositional barrier function developed in section $\ref{sec:pbf}$ can be used to ensure the forward invariance of $\mathcal{T}$. The composite PBF for safety and connectivity maintenance is proposed to be
\begin{equation} \label{eqn:bcompose}
B(x) = \prod_{\substack{i,j \in \mathcal{M}\\ j>i}} B_{ij}(x) \prod_{(i,j)\in E} \bar{B}_{ij}(x).
\end{equation}
Before using this composite PBF, we need to check whether $B(x)$ is a valid PBF, which is ensured by the following lemma.

\begin{lemma} \label{lm:validpbf}
The composite barrier function candidate $B(x)$ defined in (\ref{eqn:bcompose}) is a valid PBF, i.e.,
\begin{equation} 
\sup_{u \in U} [-B'(x;-f(x)-g(x)\mathbf u) + \alpha(B(x))]\geq0,
\end{equation}
for all $x\in\mathcal{T}$.
\end{lemma}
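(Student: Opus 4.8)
The goal is to exhibit, at each $x\in\mathcal{T}$, at least one admissible control $\mathbf{u}\in U$ that makes the PBF inequality hold, which certifies that the admissible control space is non-empty. The plan is to reduce the composite condition to the already-understood pairwise conditions. Since $B(x)$ is a product of the factors $B_{ij}$ and $\bar{B}_{ij}$, and each factor is strictly positive on the interior $\mathcal{T}$ (where all the $h_{ij}$ and $\bar h_{ij}$ are positive, so the $\max\{\cdot,0\}$ selects the smooth branch), on this interior $B(x)$ is a product of the smooth functions $h_{ij}$ and $\bar h_{ij}$. First I would restrict attention to the interior, where $B$ is smooth and the B-derivative reduces to the ordinary Lie derivative, so that $-B'(x;-f-g\mathbf{u}) = L_f B(x)+L_g B(x)\mathbf{u}$.

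Next I would compute $L_g B(x)\mathbf{u}$ via the product rule. Because each pairwise term $h_{ij}$ (resp.\ $\bar h_{ij}$) depends only on the relative state of robots $i$ and $j$, and the control enters through $\Delta\mathbf{v}_{ij}$, the Lie derivatives $L_g h_{ij}$ and $L_g\bar h_{ij}$ are affine in the relevant $\mathbf{u}_i,\mathbf{u}_j$. Factoring the product, $L_g B(x)=\sum_{\text{terms}} \bigl(\prod_{\text{other factors}}\bigr)\,L_g(\text{that factor})$. The key structural observation is that the braking/acceleration directions encoded in the safety and connectivity constraints are aligned with $\Delta\mathbf{p}_{ij}/\|\Delta\mathbf{p}_{ij}\|$, so a control that decelerates each robot toward its neighbors (or applies maximal braking) can simultaneously drive every factor's Lie derivative in the favorable direction. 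I would argue that the candidate control $\mathbf{u}=0$, or more robustly the minimally invasive braking controller built from the pairwise constraints, produces $L_f B+L_g B\mathbf{u}$ bounded below by a suitable function of $B$.

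The cleanest route is to invoke the composability criteria already stated after Theorem~\ref{thm:pbfu}: it suffices to show that each pairwise factor is individually a valid PBF and that a common admissible control exists. I would first verify that each $B_{ij}$ and each $\bar{B}_{ij}$ satisfies its own PBF inequality with some class $\mathcal{K}$ function—this follows from the worst-case braking derivation in \cite{borrmann2015Swarm}, since maximal deceleration along $\Delta\mathbf{p}_{ij}$ keeps $h_{ij}$ from decreasing, and symmetrically for $\bar h_{ij}$. Then, because the safety constraints push robots apart while connectivity constraints pull them together, I would show that these competing demands do not conflict at the pairwise level in a way that empties the intersection, using the bound $\|\mathbf{u}_i\|\le\alpha$ and the fact that $D_s<D_c$ leaves a feasible velocity band. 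Finally, on the boundary where some factor vanishes, $B(x)=0$ and $\alpha(B(x))=0$, so the inequality holds trivially provided the B-derivative term is non-negative; here I would use the selection-function characterization of the B-derivative to pick the branch corresponding to the active (smooth) constraints.

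The main obstacle I expect is the \emph{simultaneous} feasibility across all pairs: a single control $\mathbf{u}_i$ for robot $i$ must respect every safety and connectivity constraint involving $i$ at once, and the product structure means the composite Lie derivative mixes contributions from many pairs. Showing that the supremum over the bounded set $U=\{\|\mathbf{u}_i\|\le\alpha\}$ stays non-negative requires controlling the cross terms and ensuring the acceleration budget $\alpha$ is never over-subscribed; this is where the geometry of the constraints (all aligned with inter-robot directions) and the strict positivity of each factor on the interior must be exploited to guarantee the bound, rather than merely asserted.
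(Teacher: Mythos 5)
Your setup matches the paper's first step: on $\mathcal{T}$ every factor of $B$ is strictly positive, so $B$ is smooth there and the B-derivative reduces to $L_fB(x)+L_gB(x)\,\mathbf{u}$. After that, however, the proposal rests on two claims that do not hold up. First, your ``key structural observation'' --- that a braking-type control can drive \emph{every} factor's Lie derivative in the favorable direction --- is false: for any required edge $(i,j)\in E$, the control enters $h_{ij}$ and $\bar h_{ij}$ through $\pm\frac{\Delta\mathbf{p}_{ij}^T}{\|\Delta\mathbf{p}_{ij}\|}\Delta\mathbf{u}_{ij}$ with \emph{opposite} signs, so no choice of $\Delta\mathbf{u}_{ij}$ improves both the safety factor and the connectivity factor of the same pair. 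Second, your ``cleanest route'' --- verify each pairwise PBF and then exhibit a \emph{common} admissible control satisfying all pairwise inequalities at once --- is exactly the more restrictive sufficient condition stated after Theorem \ref{thm:pbfu}, and it is precisely the condition the product composition is designed to avoid: with bounded accelerations $\|\mathbf{u}_i\|\leq\alpha$ and many constraints active at a single robot, simultaneous pairwise feasibility is not known to hold (the paper's introduction cites this failure as the motivation for composing into one barrier function). You flag this simultaneous-feasibility question yourself as the ``main obstacle,'' but you give no argument for it, and that obstacle is the entire content of the lemma.

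The paper closes the gap by an argument that never requires pairwise feasibility. Since only the \emph{single} scalar inequality $L_fB+L_gB\,\mathbf{u}+\alpha(B)\geq 0$ must be feasible, it suffices to show $L_gB\neq 0$ (then some $\mathbf{u}$ makes $L_gB\,\mathbf{u}$ large enough, aided by the slack $\alpha(B)>0$ on $\mathcal{T}$). Taking $\ln B$ and differentiating turns $\frac{L_gB}{B}$ into a weighted sum of the directions $\Delta\mathbf{p}_{ij}/\|\Delta\mathbf{p}_{ij}\|$, with weights $\omega_{ij}$ that merge each pair's safety and connectivity factors and may be negative --- reflecting exactly the sign opposition noted above. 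The degeneracy condition $L_gB=0$ then reads $DWD^T[\mathbf{p}_1,\dots,\mathbf{p}_N]^T=0$ with $D$ the incidence matrix of the complete graph, which forces either $\mathbf{p}_i=\mathbf{p}_j$ for some pair with $\omega_{ij}\neq 0$ (impossible inside $\mathcal{C}_{ij}$), or all $\omega_{ij}=0$, a trivial case (complete required graph with $\bar B_{ij}=B_{ij}$ for all pairs) in which $L_fB$ is non-negative and any control works. To repair your proposal you would need either this kind of argument on the composite gradient, or a genuine proof of simultaneous pairwise feasibility --- a much stronger statement than the lemma actually needs.
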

\begin{proof}
The composite barrier function candidate $B(x)$ defined on $\mathcal{T}$ is a $C^r$ function. Thus it is equivalent to show that
\begin{equation} \label{eqn:infu}
\sup_{u \in U} [L_fB(x) + L_gB(x)\mathbf u+ \alpha(B(x))]\geq0,
\end{equation}
Note that $B(x), B_{ij}(x)$ and $\bar{B}_{ij}(x)$ are all positive in $\mathcal{T}$. Take the logarithm of $B(x)$ and differentiate using the chain rule, we get
\begin{eqnarray}
\ln(B(x)) &=& \sum_{\substack{i,j \in \mathcal{M}\\ j>i}} \ln(B_{ij}) + \sum_{(i,j)\in E} \ln(\bar{B}_{ij}), \nonumber \\
\frac{\dot{B}}{B} &=& \sum_{\substack{i,j \in \mathcal{M}\\ j>i}} \frac{\dot{B}_{ij}}{B_{ij}} +\sum_{(i,j)\in E} \frac{\dot{\bar{B}}_{ij}}{\bar{B}_{ij}}. \nonumber
\end{eqnarray}
Thus the Lie Derivative along $g$ direction is
\begin{flalign}
&\frac{L_g B}{B}\mathbf u = \sum_{\substack{i,j \in \mathcal{M}\\ j>i}} \frac{L_g B_{ij}}{B_{ij}}\mathbf u + \sum_{\substack{(i,j) \in E}} \frac{L_g \bar{B}_{ij}}{\bar{B}_{ij}}\mathbf u, &&\nonumber \\
&= \sum_{\substack{i,j \in \mathcal{M}\\ j>i}} \frac{\Delta \mathbf p_{ij}}{B_{ij}\|\Delta \mathbf p_{ij}\|} \Delta \mathbf u_{ij} - \sum_{\substack{(i,j) \in E}} \frac{\Delta \mathbf p_{ij}}{\bar{B}_{ij}\|\Delta \mathbf p_{ij}\|}\Delta \mathbf u_{ij}, &&\nonumber \\
&= \sum_{\substack{(i,j) \in E}} \frac{\bar{B}_{ij} - B_{ij}}{B_{ij}\bar{B}_{ij}}\frac{\Delta \mathbf p_{ij}}{\|\Delta \mathbf p_{ij}\|}\Delta \mathbf u_{ij} + \sum_{\substack{(i,j) \notin E}} \frac{\Delta \mathbf p_{ij}}{B_{ij}\|\Delta \mathbf p_{ij}\|}\Delta \mathbf u_{ij},&&\nonumber \\
&= \sum_{i\in \mathcal{M}} \left[ \sum_{\substack{j|(i,j) \in E}} \frac{\bar{B}_{ij} - B_{ij}}{B_{ij}\bar{B}_{ij}}\frac{\Delta \mathbf p_{ij}}{\|\Delta \mathbf p_{ij}\|} + \sum_{\substack{j|(i,j) \notin E}} \frac{\Delta \mathbf p_{ij}}{B_{ij}\|\Delta \mathbf p_{ij}\|} \right] \mathbf{u}_i.&&\nonumber
\end{flalign}
When $L_f B=\mathbf{0}$, we have 
\begin{equation}\label{eqn:lfbui}
\sum_{\substack{j|(i,j) \in E}} \frac{\bar{B}_{ij} - B_{ij}}{B_{ij}\bar{B}_{ij}\|\Delta \mathbf p_{ij}\|}\Delta \mathbf p_{ij} + \sum_{\substack{j|(i,j) \notin E}} \frac{\Delta \mathbf p_{ij}}{B_{ij}\|\Delta \mathbf p_{ij}\|}=0,\forall i\in \mathbf{M}.
\end{equation}
Define a diagonal weight matrix $W=diag(\omega_{ij})\in\mathbb{R}^{\frac{N(N-1)}{2}\times\frac{N(N-1)}{2}}$ for a complete graph, i.e., all vertexes are connected to each other, where
\begin{equation}
\omega_{ij} =
  \begin{cases}
    \frac{\bar{B}_{ij} - B_{ij}}{B_{ij}\bar{B}_{ij}\|\Delta \mathbf p_{ij}\|}      &, \quad \text{if } (i,j)\in E,\\
    \frac{1}{B_{ij}\|\Delta \mathbf p_{ij}\|},  & \quad \text{if } (i,j)\notin E,\\
  \end{cases} \nonumber
\end{equation}
Let $W^{1/2}=diag(\sqrt{\omega_{ij}})$, note $\omega_{ij}$ can be negative, in which case $W^{1/2}$ contains imaginary elements. Denote $D=[D_{ij}]\in\mathbb{R}^{N\times\frac{N(N-1)}{2}}$ as the incidence matrix for a complete graph with random orientations,
\begin{equation}
D_{ij} =
  \begin{cases}
    1      &, \quad \text{if vertex } i \text{ is the tail of edge } j,\\
    -1   &, \quad \text{if vertex } i \text{ is the tail of edge } j.\\
  \end{cases} \nonumber
\end{equation}
Then (\ref{eqn:lfbui}) can be written as
\begin{equation}
D WD^T [\mathbf p_1, \mathbf p_2, ..., \mathbf p_N]^T=0, \nonumber
\end{equation}
which implies $W^{1/2}D^T [\mathbf p_1, \mathbf p_2, ..., \mathbf p_N]^T=0$. 

If $\exists \omega_{ij}\neq 0$, then $\mathbf p_i =\mathbf p_j$. This is impossible, because agents $i$ and $j$ can't be on top of each other in $\mathcal{C}_{ij}$. Therefore, in almost all cases, we have $L_gB\neq 0$. A control action $\mathbf u$ can always be found that shows (\ref{eqn:infu}) is satisfied.

If $\nexists \omega_{ij}\neq 0$, i.e., all weights $\omega_{ij}$ are zero, then the required connectivity graph is a complete graph and $\bar{B}_{ij}=B_{ij}, \forall i\neq j$. It can be shown that $L_fB$ is non-negative in this case. Therefore, in this trivial case, we have $L_gB=0,L_fB>-\alpha(B)$ for any class $\mathcal{K}$ function $\alpha$. Any control action $\mathbf{u}$  can validate that (\ref{eqn:infu}) is satisfied.

To sum up, the composite safety and connectivity barrier function $B(x)$ satisfies (\ref{eqn:infu}) $\forall x\in\mathcal{T}$, and is thus a valid PBF.
\end{proof}

\textit{Lemma} \ref{lm:validpbf} also implies that the admissible control space,
\begin{equation}\label{eqn:ukx}
K_{\mathcal{T}}(x) = \{\mathbf u\in U~|~ L_fB(x)+L_gB(x)\mathbf{u}+\alpha(B(x))\geq 0\},
\end{equation} 
is always non-empty. With this result, we will present the main theorem of this paper.
\vspace{.3cm}
\begin{theorem}\label{thm:pbfcompose}
{\it Given any required connectivity graph $\mathcal{G}=(V,E)$, a PBF $B(x)$ defined in (\ref{eqn:bcompose}), any Lipschitz continuous controller $\mathbf{u}(x) \in K_{\mathcal{T}}(x)$ for the dynamical system (\ref{eqn:dint}) guarantees that the team of mobile robots are \textit{safe} and \textit{connected}.}
\end{theorem}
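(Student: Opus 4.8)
The plan is to reduce the joint \emph{safe} and \emph{connected} claim to a single forward-invariance statement for the set $\mathcal{T}$ defined in (\ref{eqn:setall}), which can then be discharged directly by \textit{Theorem} \ref{thm:pbf}. The key structural observation is that the composite PBF $B(x)$ in (\ref{eqn:bcompose}) is built as a \emph{product} of the factors $B_{ij}$ and $\bar{B}_{ij}$, each of which has the form $\max\{\cdot,0\}$ and is therefore non-negative everywhere. Consequently $B(x)>0$ holds if and only if every factor is strictly positive, i.e., if and only if $x$ lies simultaneously in all pairwise safe sets $\mathcal{C}_{ij}$ and all required connectivity sets $\bar{\mathcal{C}}_{ij}$.

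First I would record the set identity $\{x \mid B(x)>0\} = \mathcal{T}$. This is just the multi-set generalization of the AND-composition (intersection) half of \textit{Lemma} \ref{lm:setun}: since a finite product of non-negative reals is positive exactly when each factor is positive, the two-set argument used there extends verbatim, or by a trivial induction, to the finitely many factors appearing in (\ref{eqn:bcompose}). Collecting the safe set (\ref{eqn:safec}) and the connectivity set (\ref{eqn:connectc}) then gives $\mathcal{T} = \mathcal{C} \cap \bar{\mathcal{C}}$.

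Next I would invoke \textit{Lemma} \ref{lm:validpbf}, which certifies that $B(x)$ is a valid PBF on $\mathcal{T}$, so that the admissible control space $K_{\mathcal{T}}(x)$ in (\ref{eqn:ukx}) is non-empty at every $x \in \mathcal{T}$. With $B$ a valid PBF and the hypothesis that $\mathbf{u}(x)\in K_{\mathcal{T}}(x)$ is Lipschitz continuous, \textit{Theorem} \ref{thm:pbf} applies directly, noting that the double-integrator dynamics (\ref{eqn:dint}) are a special case of the control-affine system (\ref{eqn:system}). This yields that $\mathcal{T}$ is forward invariant: starting from any $x(0)\in\mathcal{T}$, the trajectory satisfies $x(t)\in\mathcal{T}$ for all $t\geq 0$. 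Because $\mathcal{T}\subseteq\mathcal{C}$ and $\mathcal{T}\subseteq\bar{\mathcal{C}}$, the ensemble state remains in both $\mathcal{C}$ and $\bar{\mathcal{C}}$ for all time, which is precisely \textit{Definition 4.1} (safe) and \textit{Definition 4.2} (connected).

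The conceptual weight of the theorem is almost entirely front-loaded into \textit{Lemma} \ref{lm:validpbf}: the genuinely delicate point is the non-emptiness of $K_{\mathcal{T}}(x)$, i.e., that $L_gB\neq 0$ outside the trivial complete-graph case, which rules out a degenerate constraint admitting no feasible control. Granting that lemma, the theorem is a clean assembly of the set identity and \textit{Theorem} \ref{thm:pbf}, and I do not expect a substantive obstacle. The only item deserving care in the write-up is the implicit standing assumption that the team is initialized inside $\mathcal{T}$, so that $B(x(0))>0$; this positivity of the initial value is exactly what is needed for the comparison-lemma argument underlying \textit{Theorem} \ref{thm:pbf} to launch.
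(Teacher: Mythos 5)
Your proposal is correct and follows essentially the same route as the paper's own proof: invoke \textit{Lemma} \ref{lm:validpbf} to certify $B$ as a valid PBF on $\mathcal{T}$, apply \textit{Theorem} \ref{thm:pbf} to get forward invariance of $\mathcal{T}$, and then use the non-negativity of each factor $B_{ij}$, $\bar{B}_{ij}$ to conclude that $B>0$ forces every factor positive, hence the state stays in $\mathcal{C}\cap\bar{\mathcal{C}}$ and the team is \textit{safe} and \textit{connected}. Your explicit remarks on the set identity $\{x \mid B(x)>0\}=\mathcal{T}$ (as a multi-factor extension of \textit{Lemma} \ref{lm:setun}) and on the need for $x(0)\in\mathcal{T}$ are sound refinements of the same argument, not a different one.
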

\begin{proof}
\textit{Lemma} \ref{lm:validpbf} ensures that $B(x)$ is a valid PBF defined for the set $\mathcal{T}$ in (\ref{eqn:setall}). Thus when $\mathbf{u}(x) \in K_{\mathcal{T}}(x)$,  $\mathcal{T}$ is forward invariant from \textit{Theorem} \ref{thm:pbf}, i.e., $B(x)>0, \forall t>0$. From definitions (\ref{eqn:2ndsafe}), (\ref{eqn:2ndconnect}), and (\ref{eqn:bcompose}), all PBFs are constructed to be non-negative. Therefore, 
\begin{eqnarray*}
B_{ij}>0,& \forall i,j \in \mathcal{M}, j>i, &\forall t>0, \\
\bar{B}_{ij}>0,& \forall (i,j)\in E, &\forall t>0.
\end{eqnarray*}
Both $\mathcal{C}$ and $\bar{\mathcal{C}}$ are forward invariant. $\mathcal{C}$ encodes that all agents do not collide with each other, while $\bar{\mathcal{C}}$ encodes that all connectivity requirements specified by the graph $\mathcal{G}$ are satisfied, i.e., the team of mobile robots are \textit{safe} and \textit{connected}.
\end{proof}
\vspace{.3cm}

\textit{Theorem} \ref{thm:pbfcompose} ensures that the team of mobile robots remains \textit{safe} and \textit{connected} as long as the controller $\mathbf{u}(x)$ stays within the admissible control space $K_{\mathcal{T}}(x)$. Up until now, we have a strategy to formally ensure safety and connectivity of the team of mobile robots. Next, an optimization based controller will be presented to inject higher level goals, e.g., visiting waypoints, form certain shapes, and covering area, into the controller design.

\subsection{Minimally Invasive Optimization based Controller}\label{sec:mininvasive}
Designing a single controller for a multi-robot system that achieves certain goals while ensuring safety and connectivity might render the problem untraceable. An alternative approach is to design a nominal controller $\hat{\mathbf{u}}$ that assumes safety and connectivity, and then correct the controller in a minimally invasive way when it violates safety or connectivity. This is achieved by running the following QP-based controller, 
\begin{equation}
\label{eqn:QPcontroller}
 \begin{aligned}
\mathbf{u}^* =  & \:\: \underset{\mathbf u}{\text{argmin}}
 & & J(\mathbf u) = \sum_{i=1}^{N} \|{\mathbf u}_{i} - \hat{\mathbf u}_{i} \|^2    \\ 
 & \quad \text{s.t.}
 & & L_fB(x)+L_gB(x)\mathbf{u}+\alpha(B(x))\geq 0,   \\
 &
 & &    \| \mathbf u_i\|_\infty  \leq \alpha_i,\:\: \forall i \in\mathcal{M}.
 \end{aligned}
\end{equation}
The control barrier constraint (\ref{eqn:QPcontroller}) is also referred to as the composite safety and connectivity barrier certificates. This QP-based controller allows the nominal controller to execute as long as it satisfies the composite safety and connectivity barrier certificates. When violations of safety or connectivity are imminent, the nominal controller will be modified with a minimal possible impact in the least-squares sense. By running this QP-based controller, the higher level objectives specified by the nominal controller are unified with the safety and connectivity requirements encoded by the safety and connectivity barrier certificates.

\subsection{Maintaining Dynamical Connectivity Graphs}
Due to the dynamically changing environment and robot states, it would sometimes be favourable to allow the robots to switch between different connectivity graphs \cite{kok2003multi}. Motivated by the need of maintaining dynamically changing connectivity graphs, composite safety and connectivity barrier certificates are proposed to ensure safety and dynamical connectivity of the team of mobile robots.

Let $\tilde{\mathcal{G}} = \{\mathcal{G}_1,\mathcal{G}_2, ..., \mathcal{G}_M\}$ denote the set of all allowable connectivity graphs, where $\mathcal{G}_i = (V,E_i), i \in \mathcal{P}$, $\mathcal{P}=\{1,2, ... ,M\}$ is the index set of $\tilde{\mathcal{G}}$. To stay connected, the team of mobile robots needs to satisfy at least one of these allowable connectivity graphs. The set that encodes the dynamical connectivity graph requirement is
\begin{equation}
\tilde{\mathcal C} = \bigcup_{k\in\mathcal P} \underset{(i,j)\in E_k}{\bigcap} \bar{\mathcal{C}}_{ij}
\end{equation}

\textit{Definition 4.3}: Given a set of allowable connectivity graphs $\tilde{\mathcal{G}}$, the team of $N$ mobile robots with dynamics given in (\ref{eqn:dint}) is \textit{dynamically connected}, if the ensemble state $x$ stays in the set $\tilde{\mathcal{C}}$ for all time $t\geq 0$.

In order for the team of mobile robots to stay both \textit{safe} and \textit{dynamically connected}, the ensemble state $x$ shall stay in
\begin{equation}\label{eqn:setalldyna}
\tilde{\mathcal{T}} = \left(\underset{\substack{i,j \in \mathcal{M}\\ j>i}}{\bigcap} \mathcal{C}_{ij} \right) \left( \bigcup_{k\in\mathcal P} \underset{(i,j)\in E_k}{\bigcap} \bar{\mathcal{C}}_{ij} \right),
\end{equation}
for all time $t\geq 0$. Safety and dynamical connectivity guarantees similar to \textit{Theorem} \ref{thm:pbfcompose} can be achieved by using a composite PBF introduced in Section \ref{sec:compose},
\begin{equation} \label{eqn:bcomposedyna}
\tilde{B}(x) = \left(\prod_{\substack{i,j \in \mathcal{M}\\ j>i}} B_{ij}(x)\right) \left(\sum_{k\in \mathcal{P}}\prod_{(i,j)\in E_k} \bar{B}_{ij}(x)\right).
\end{equation} 
It can be shown that $\tilde{B}(x)$ is a valid PBF on $\tilde{\mathcal{T}}$ using the same techniques like \textit{Lemma} \ref{lm:validpbf}, i.e., the admissible control space
\begin{equation}\label{eqn:ukx}
K_{\tilde{\mathcal{T}}}(x) = \{\mathbf u\in U~|~ L_f\tilde{B}(x)+L_g\tilde{B}(x)\mathbf{u}+\alpha(\tilde{B}(x))\geq 0\},
\end{equation} 
is always non-empty.

\vspace{.3cm}
\begin{theorem}
{\it Given a set of allowable connectivity graphs $\tilde{\mathcal{G}} = \{\mathcal{G}_1,\mathcal{G}_2, ..., \mathcal{G}_M\}$, a PBF $\tilde{B}(x)$ defined in (\ref{eqn:bcomposedyna}), any Lipschitz continuous controller $\mathbf{u}(x) \in K_{\tilde{\mathcal T}}(x)$ for the dynamical system (\ref{eqn:dint}) guarantees that the team of mobile robots are \textit{safe} and \textit{dynamically connected}.}
\end{theorem}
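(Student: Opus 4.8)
The plan is to follow the same template as the proof of Theorem~\ref{thm:pbfcompose}, reducing the statement to the forward-invariance guarantee of Theorem~\ref{thm:pbf} once two ingredients are in place: that the positivity set of $\tilde{B}$ coincides with $\tilde{\mathcal{T}}$, and that $\tilde{B}$ is a valid PBF. First I would verify the set identity $\{x\mid\tilde{B}(x)>0\}=\tilde{\mathcal{T}}$. Because every factor $B_{ij}$ and $\bar B_{ij}$ is non-negative by construction (each is a pointwise maximum with zero), Lemma~\ref{lm:setun}, applied inductively, tells me that products encode intersections (AND) and sums encode unions (OR). Concretely, $\prod_{j>i}B_{ij}$ is positive exactly on $\mathcal{C}=\bigcap_{j>i}\mathcal{C}_{ij}$; for each fixed $k$ the inner product $\prod_{(i,j)\in E_k}\bar B_{ij}$ is positive exactly on $\bigcap_{(i,j)\in E_k}\bar{\mathcal C}_{ij}$; and the outer sum $\sum_{k}\prod_{(i,j)\in E_k}\bar B_{ij}$, being a sum of non-negative terms, is positive exactly on $\bigcup_{k}\bigcap_{(i,j)\in E_k}\bar{\mathcal C}_{ij}=\tilde{\mathcal C}$. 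The product of these two non-negative blocks is therefore positive precisely on $\mathcal C\cap\tilde{\mathcal C}=\tilde{\mathcal T}$, matching (\ref{eqn:setalldyna}).

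Second, I would invoke the validity of $\tilde{B}$ as a PBF on $\tilde{\mathcal{T}}$, which guarantees $K_{\tilde{\mathcal T}}(x)$ is non-empty and which the text establishes by the same argument as Lemma~\ref{lm:validpbf}. Since $\tilde B$ is itself a $PC^r$-function (products and sums of $PC^r$-functions remain $PC^r$ \cite{scholtes2012introduction}), its B-derivative is well-defined on $\tilde{\mathcal T}$, and Theorem~\ref{thm:pbf} then applies verbatim: for any Lipschitz $\mathbf u(x)\in K_{\tilde{\mathcal T}}(x)$ one obtains $\partial_-\tilde B(x(t))\geq-\alpha(\tilde B(x))$, and the Comparison-Lemma argument of Theorem~\ref{thm:pbf} yields $\tilde B(x(t))>0$ for all $t\geq0$. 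Hence $\tilde{\mathcal T}$ is forward invariant.

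Finally, I would translate forward invariance back into the robot-level statement. Since $x(t)\in\tilde{\mathcal T}$ for all $t\geq0$, the set identity from the first step gives $x(t)\in\mathcal C$ and $x(t)\in\tilde{\mathcal C}$ for all $t\geq0$; by Definition~4.1 the team is \emph{safe}, and by Definition~4.3 it is \emph{dynamically connected}, which is the claim.

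I expect the only genuine obstacle to be the PBF-validity step, i.e.\ showing $K_{\tilde{\mathcal T}}(x)\neq\emptyset$. In Lemma~\ref{lm:validpbf} the logarithmic chain-rule trick worked because $B$ was a pure product, so $\dot B/B$ split into a clean sum of $\dot B_{ij}/B_{ij}$ terms that reduced to a weighted-incidence-matrix condition and a generic $L_gB\neq0$ argument. Here the connectivity block is a \emph{sum} of products, so $L_g\tilde B$ does not factor through a single logarithm, and the generic-nonvanishing argument must be redone for the sum-of-products structure (or one checks the easier but more restrictive sufficient condition noted after Theorem~\ref{thm:pbfu}). I would therefore concentrate the effort on confirming that non-emptiness of the admissible control space survives the outer union operator, and treat the remaining set-algebra and invariance steps as routine consequences of Lemma~\ref{lm:setun} and Theorem~\ref{thm:pbf}.
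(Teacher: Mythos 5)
Your proposal takes essentially the same route as the paper, whose own proof is only the one-line remark that the argument is ``similar to \textit{Lemma} \ref{lm:validpbf}, \textit{Theorem} \ref{thm:pbfu}, and \textit{Theorem} \ref{thm:pbfcompose}'': you expand exactly that chain---positivity-set algebra via \textit{Lemma} \ref{lm:setun} (products as AND, sums as OR), PBF validity on $\tilde{\mathcal{T}}$, forward invariance via \textit{Theorem} \ref{thm:pbf}, and translation back through Definitions 4.1 and 4.3. Your closing caveat, that the logarithmic chain-rule trick of \textit{Lemma} \ref{lm:validpbf} does not carry over verbatim to the sum-of-products structure of $\tilde{B}$ and so the non-emptiness of $K_{\tilde{\mathcal{T}}}(x)$ must be re-established, correctly pinpoints the one step the paper glosses over, and is the right place to concentrate any further effort.
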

The proof of this theorem is similar to \textit{Lemma} \ref{lm:validpbf}, \textit{Theorem} \ref{thm:pbfu}, and \textit{Theorem} \ref{thm:pbfcompose}.
\vspace{.3cm}
\section{Robotic Implementations} \label{sec:exp}
The composite safety and connectivity barrier certificates were tested on a team of four Khepera robots. The real-time positions of the robots are tracked by the Optitrack Motion Capture System. The mutli-robot communications and controls are executed on the Robot Operating System (ROS).

The nominal controller was designed as a waypoint controller, which used a go-to-goal behavior to visit the specified waypoints without considering safety and connectivity. As illustrated in Fig. \ref{fig:waypoints}, each robot needs to visit three waypoints sequencially. Those waypoints are intentionally designed to make robots collide at multiple places. 
\begin{figure}[h]
  \centering
  \resizebox{1.8in}{!}{\includegraphics{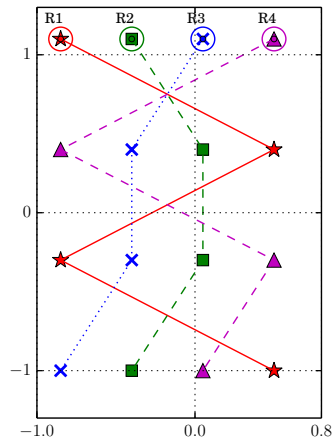}} 
  \caption{Planned waypoints for four robot agents. R$i$ stands for robot $i$, where $i=1,2,3,4$. The lines represent the nominal trajectories of the robots if they execute the nominal waypoint controller.}
  \label{fig:waypoints}
\end{figure}

\subsection{Composite Safety Barrier Certificates}
In the first experiment, the composite safety barrier certificates were wrapped around the nominal waypoint controller using the QP-based strategy (\ref{eqn:QPcontroller}). The composite PBF was formulated as 
\begin{equation*}
B = B_{12}B_{13}B_{14}B_{23}B_{24}B_{34},
\end{equation*}
so that all possible pairwise collisions are avoided. No connectivity constraints were considered in this experiment.

As shown in Fig. \ref{fig:exp1dist}, all the inter-robot distances are always larger than the safety distance $D_s$, i.e., no collision happened during the experiment. Fig. \ref{fig:expsafe} are snapshots taken by an overhead camera and plotted robot trajectories. All robots successfully visited the specified waypoints without colliding into each other. Note that without the connectivity constraints, the mobile robot team sometimes got disconnected  during the experiment, e.g., the team split into two parts in \ref{fig:t1e}.

\begin{figure}[h]
  \centering
  \resizebox{3.2in}{!}{\includegraphics{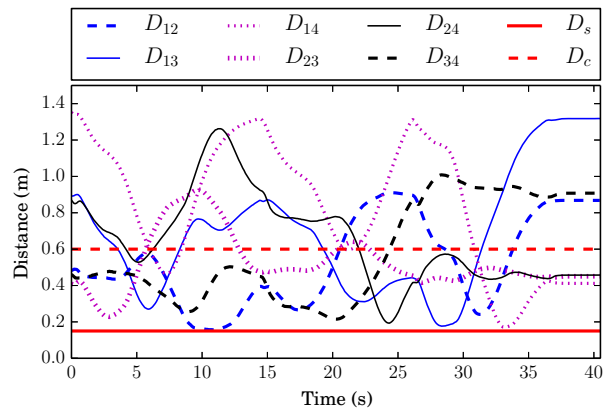}} 
  \caption{Evolution of the inter-robot distances during the experiment. $D_{ij}$ represents the distance between robot $i$ and robot $j$. $D_s=0.15m$ and $D_c=0.6$ are the safety and connectivity distance. $D_{ij}>D_s$ implies that robots $i$ and $j$ did not collide.}
  \label{fig:exp1dist}
\end{figure}

\subsection{Composite Safety and Connectivity Barrier Certificates}
During the second experiment, the composite safety and connectivity barrier certificates were wrapped around the waypoint controller using the QP-based strategy (\ref{eqn:QPcontroller}). The composite PBF is designed as
\begin{equation*}
B = B_{12}B_{13}B_{14}B_{23}B_{24}B_{34}\bar{B}_{23}(\bar{B}_{12}+\bar{B}_{13})(\bar{B}_{24}+\bar{B}_{34}),
\end{equation*}
which encodes that: 1) there should be no inter-robot collisions; 2) robot $2$ and $3$ should always be connected; 3) robot $1$ should be connected to robot $2$ or $3$; 4) robot $4$ should be connected to robot $2$ or $3$.

As shown in Fig. \ref{fig:exp2dist}, the inter-robot distances were always larger than $D_s$, i.e., the team of mobile robots did not collide with each other during the experiment. At the same time, all the connectivity constraints were satisfied, i.e., 1) $D_{23}$ was always smaller than $D_c$; 2) $\min\{D_{12},D_{13}\}$ was always smaller than $D_c$; 2) $\min\{D_{24},D_{34}\}$ was always smaller than $D_c$. The team of mobile robots satisfied all the safety and connectivity requirements specified by the safety and connectivity barrier certificates.
\begin{figure}[t!]
  \centering
  \resizebox{3.2in}{!}{\includegraphics{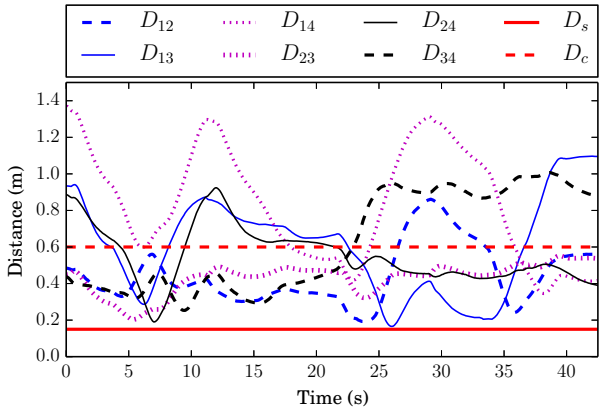}}
  \caption{Evolution of the inter-robot distances during the experiment. $D_{ij}$ represents the distance between robot $i$ and robot $j$. $D_s=0.15m$ and $D_c=0.6$ are the safety and connectivity distance. $D_{ij}>D_s$ implies that robots $i$ and $j$ do not collide. $D_{ij}<D_c$ implies that robots $i$ and $j$ are in connectivity range. }
  \label{fig:exp2dist}
\end{figure}

The snapshots during the experiment in Fig. \ref{fig:expsc} illustrated that the robots visited all specified waypoints except the last one. This is because the last set of waypoints violated the connectivity constraints, i.e., robot 1 can't reach its waypoint without breaking its connectivity to robot 2 and 3. This experiment also indicates that not all higher level objectives are compatible with the safety and connectivity constraints.

\section{Conclusion and future work} \label{sec:conclude}
This paper presented a systematic way to compose multiple objectives using the compositional barrier functions. AND and OR logical operators were designed to provably compose multiple non-negotiable objectives, with conditions for composibility provided. The composite safety and connectivity barrier certificates were synthesized using the compositional barrier functions to formally ensure safety and connectivity for teams of mobile robots. The resulting barrier certificates were then combined with the general higher level objectives using an optimization-based controller. Robotic experimental implementations validated the effectiveness of the proposed method.

\begin{figure}[H]
\centering
\begin{subfigure}{.21\textwidth}
  \centering
  \resizebox{1.5in}{!}{\includegraphics{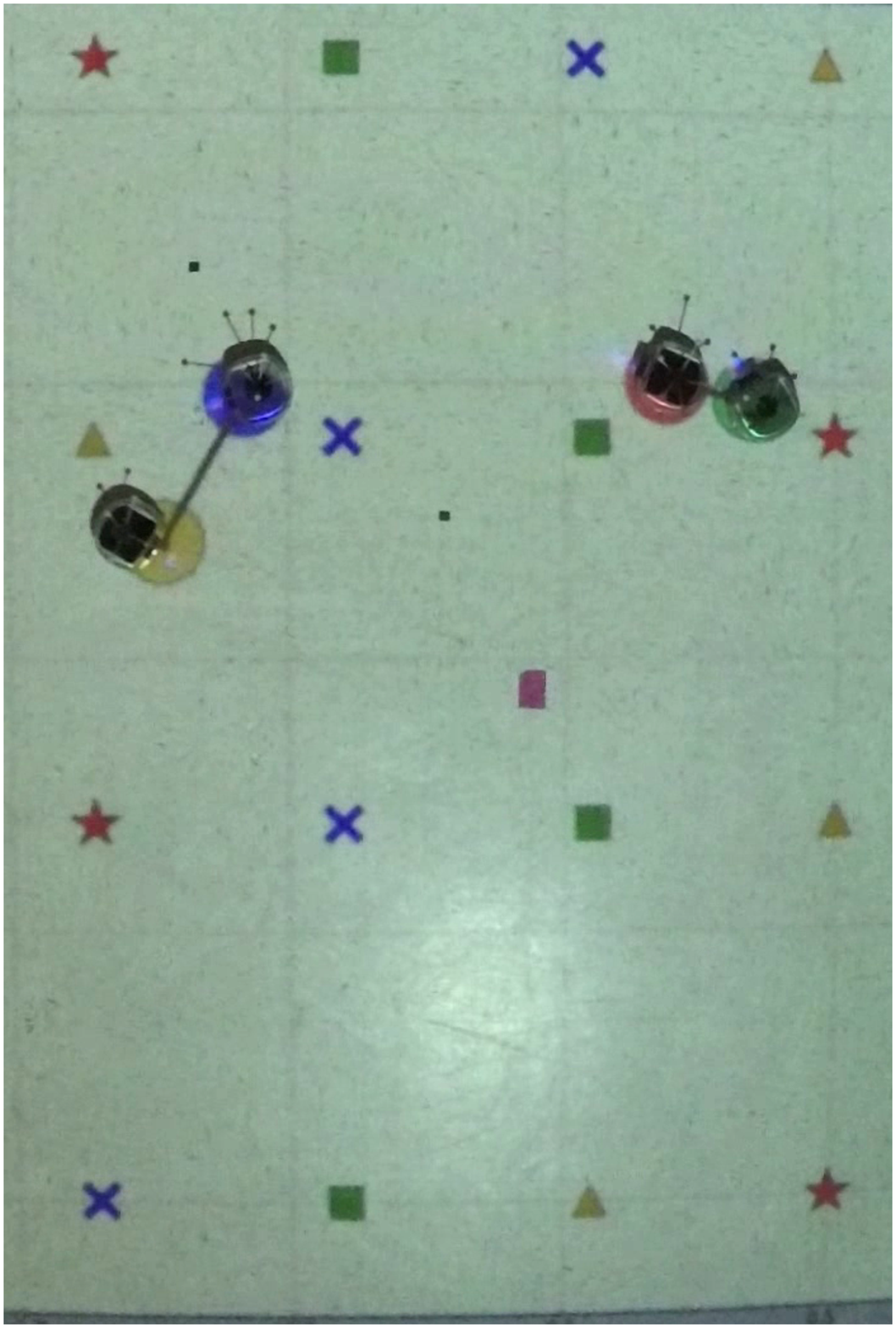}}
  \caption{Agents at 10.0s}
  \label{fig:t1e}
\end{subfigure}%
\begin{subfigure}{.23\textwidth}
  \centering
  \resizebox{1.8in}{!}{\includegraphics{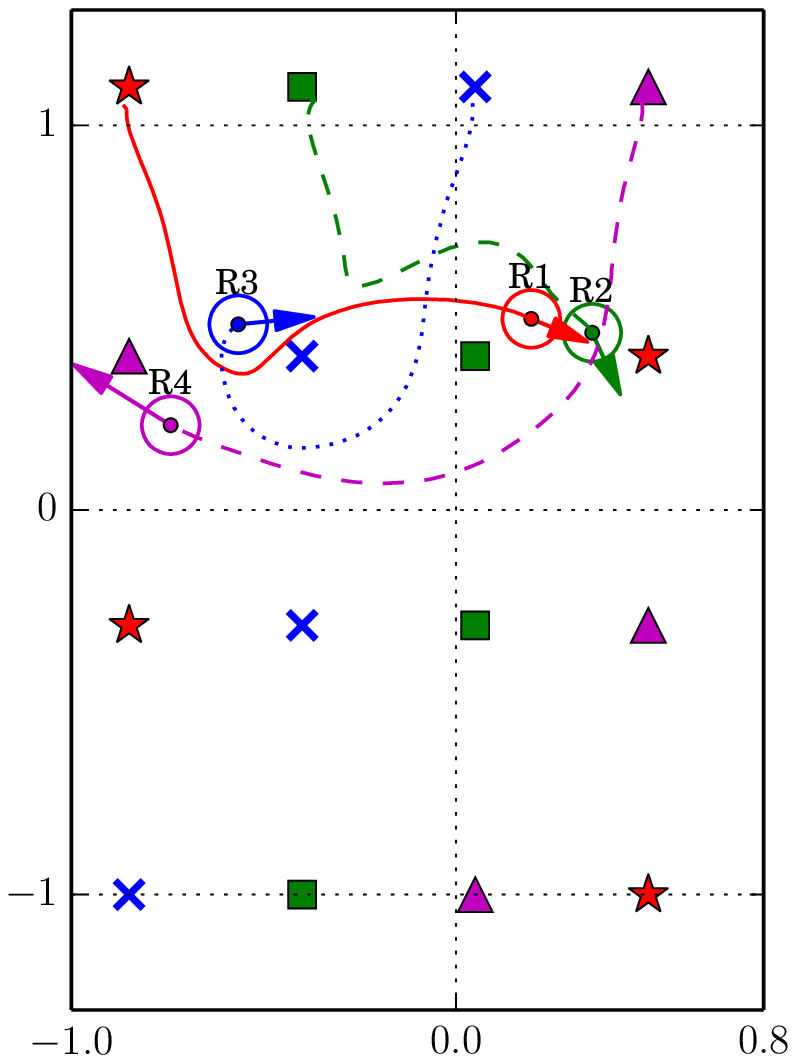}}
  \label{fig:t1p}
\end{subfigure}%
\\
\begin{subfigure}{.21\textwidth}
  \centering
  \resizebox{1.5in}{!}{\includegraphics{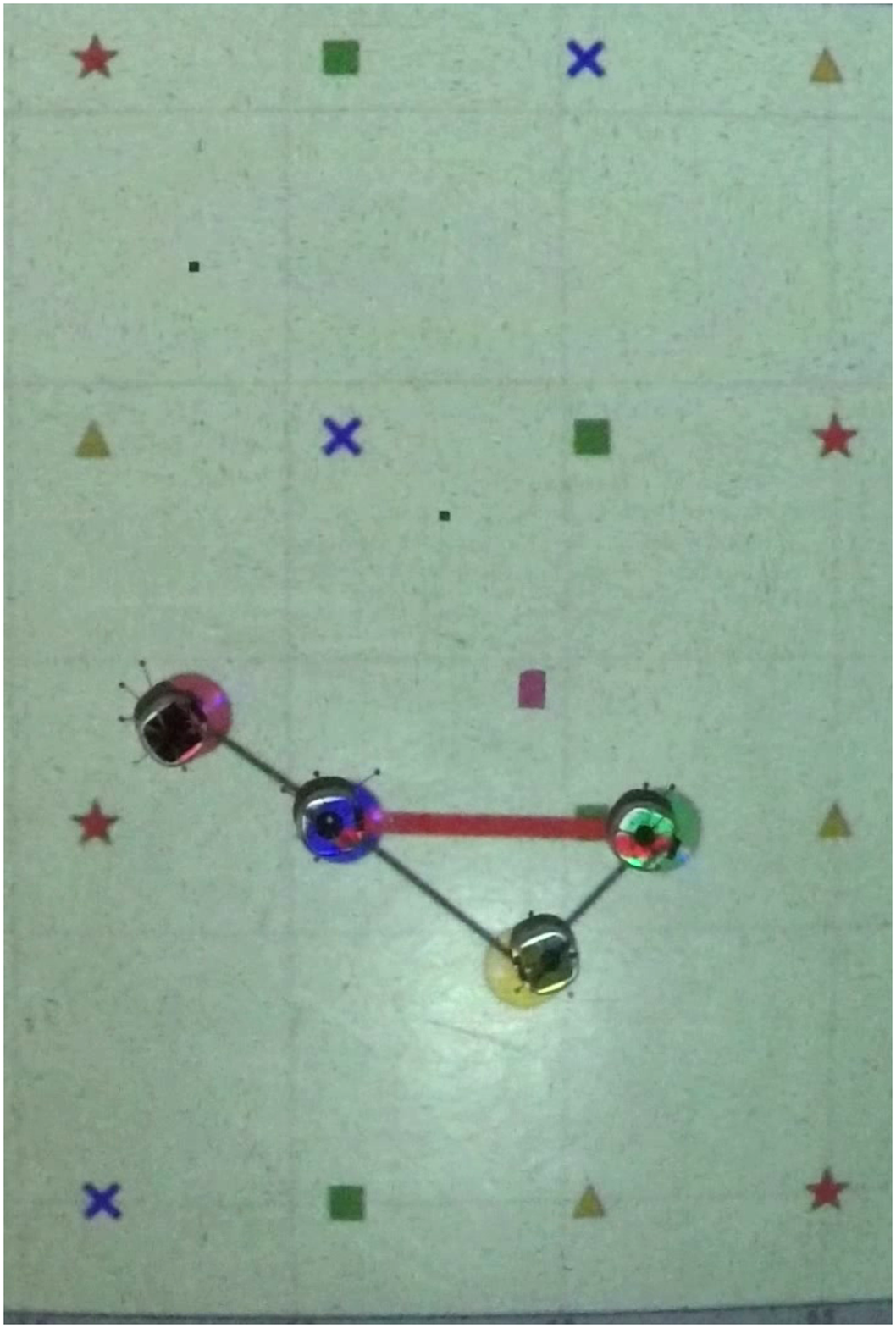}}
  \caption{Agents at 23.0s}
  \label{fig:t2e}
\end{subfigure}
\begin{subfigure}{.23\textwidth}
  \centering
  \resizebox{1.8in}{!}{\includegraphics{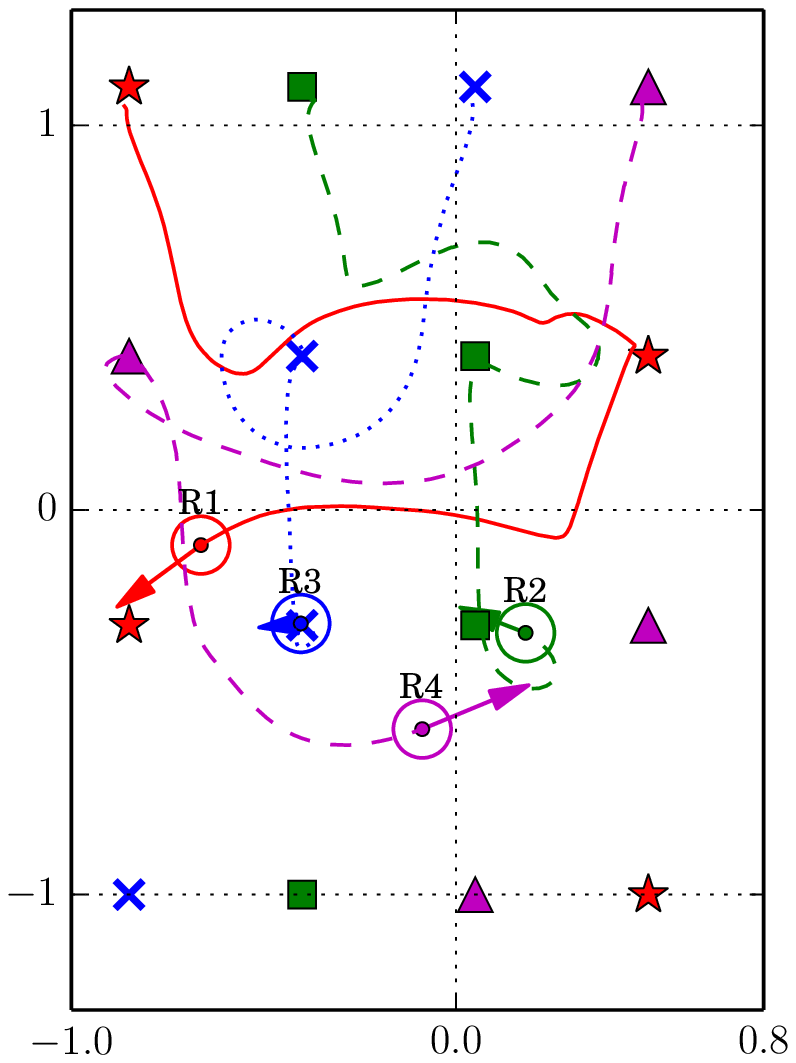}}
  \label{fig:t2p}
\end{subfigure}
\\
\begin{subfigure}{.21\textwidth}
  \centering
  \resizebox{1.5in}{!}{\includegraphics{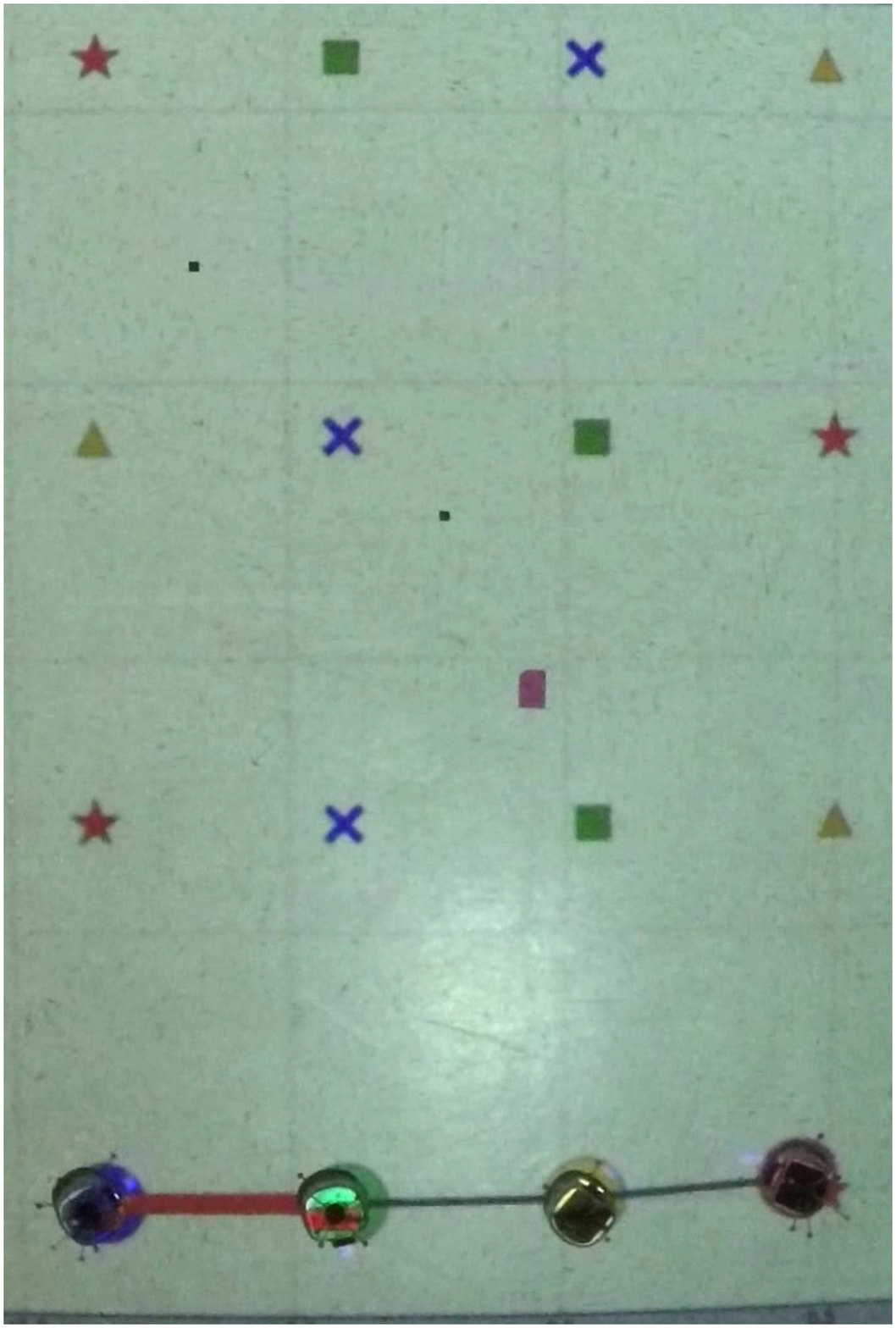}}
  \caption{Agents at 36.0s}
  \label{fig:t3e}
\end{subfigure}%
\begin{subfigure}{.23\textwidth}
  \centering
  \resizebox{1.8in}{!}{\includegraphics{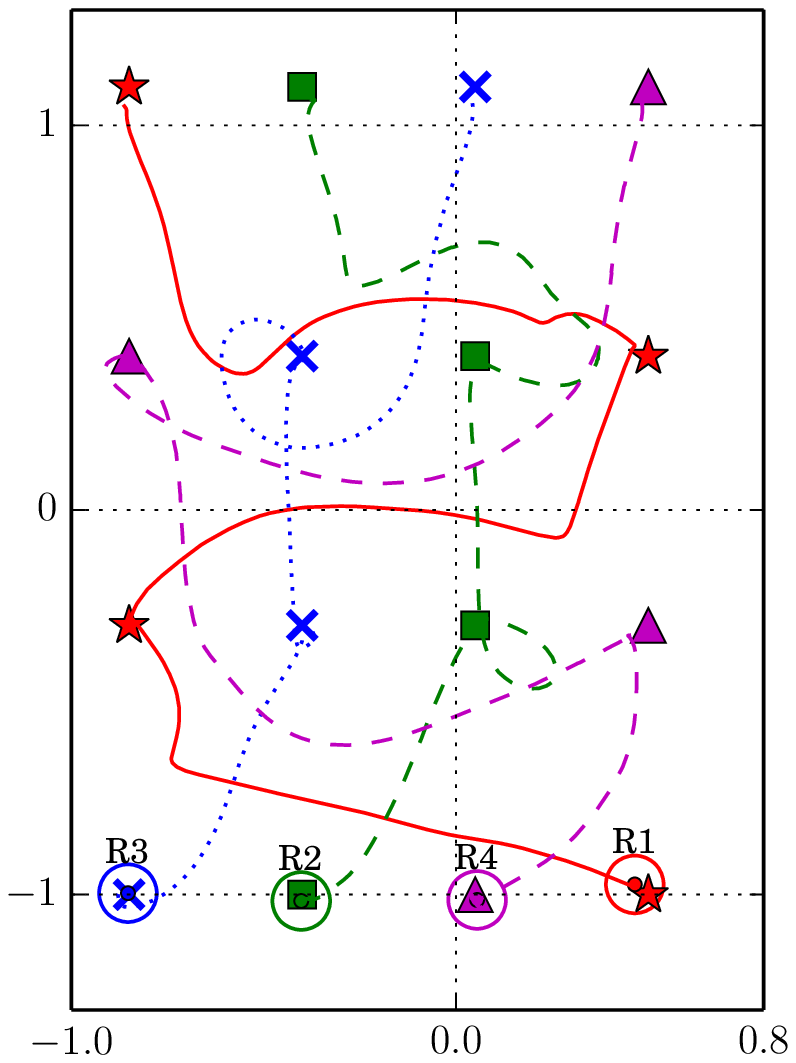}}
  \label{fig:t3p}
\end{subfigure}%
\caption{Experiement of four mobile robots executing waypoint controller regulated by safety barrier certificates. Pictures on the left are taken by an overhead camera. The star, square, cross and triangular markers representing waypoints are projected onto the ground. A straght line connecting two robots were projected onto the ground if the two robots are closer than $D_c=0.6m$. Figures on the left visualize the trajectories, current poisitions and current velocities of the robots. A video of the experiment can be found online \cite{Compose:video}.}\label{fig:expsafe}
\end{figure}

\begin{figure}[H]
\centering
\begin{subfigure}{.21\textwidth}
  \centering
  \resizebox{1.5in}{!}{\includegraphics{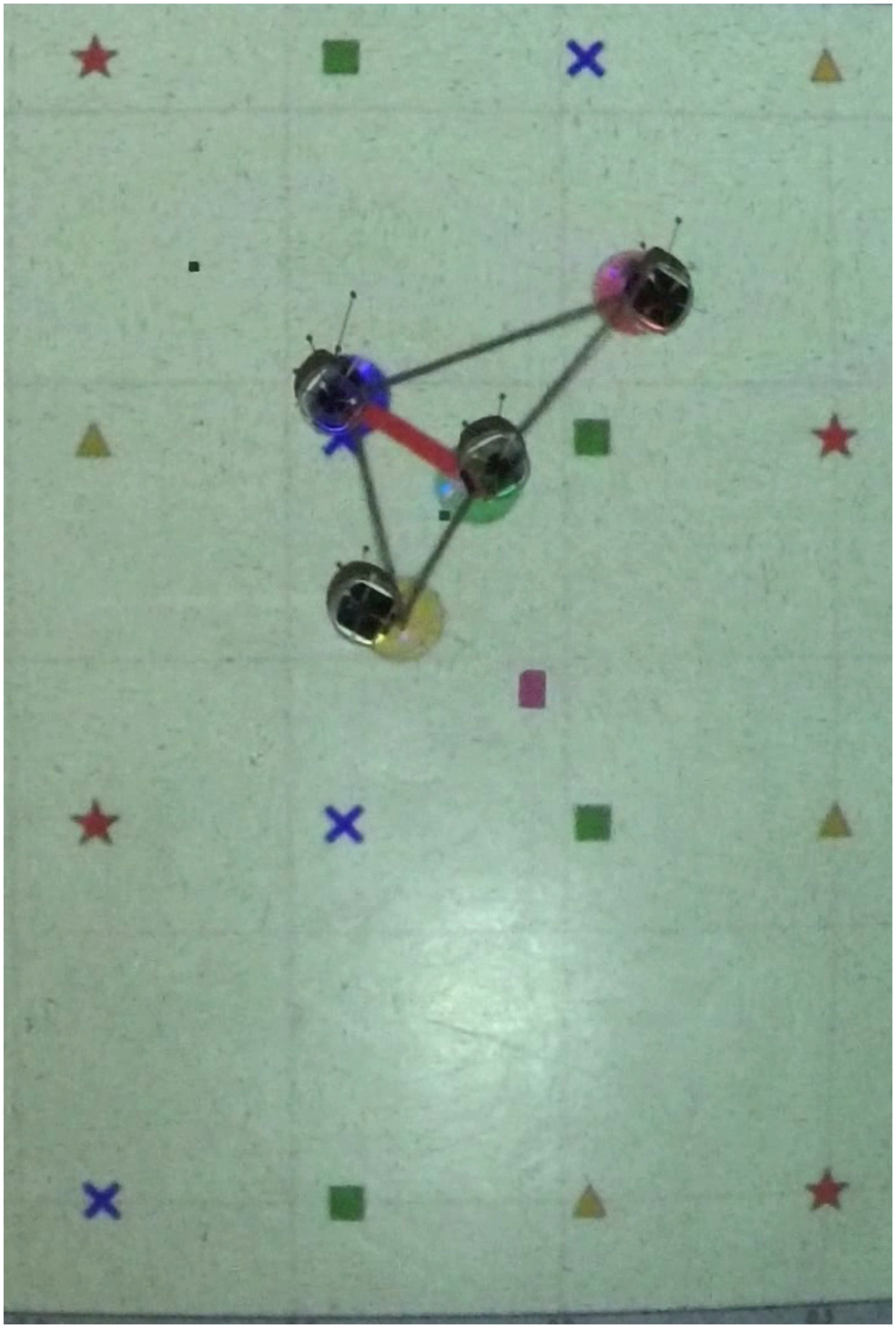}}
  \caption{Agents at 8.0s}
  \label{fig:t1ec}
\end{subfigure}%
\begin{subfigure}{.23\textwidth}
  \centering
  \resizebox{1.8in}{!}{\includegraphics{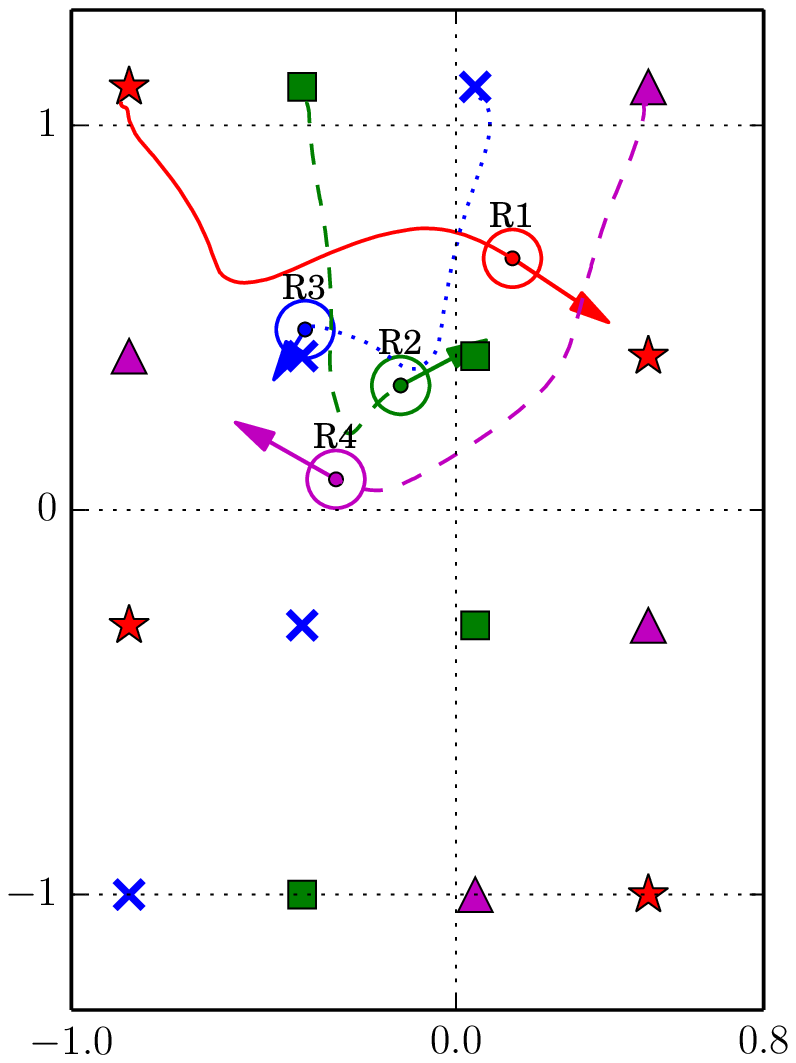}}
  \label{fig:t1pc}
\end{subfigure}%
\\
\begin{subfigure}{.21\textwidth}
  \centering
  \resizebox{1.5in}{!}{\includegraphics{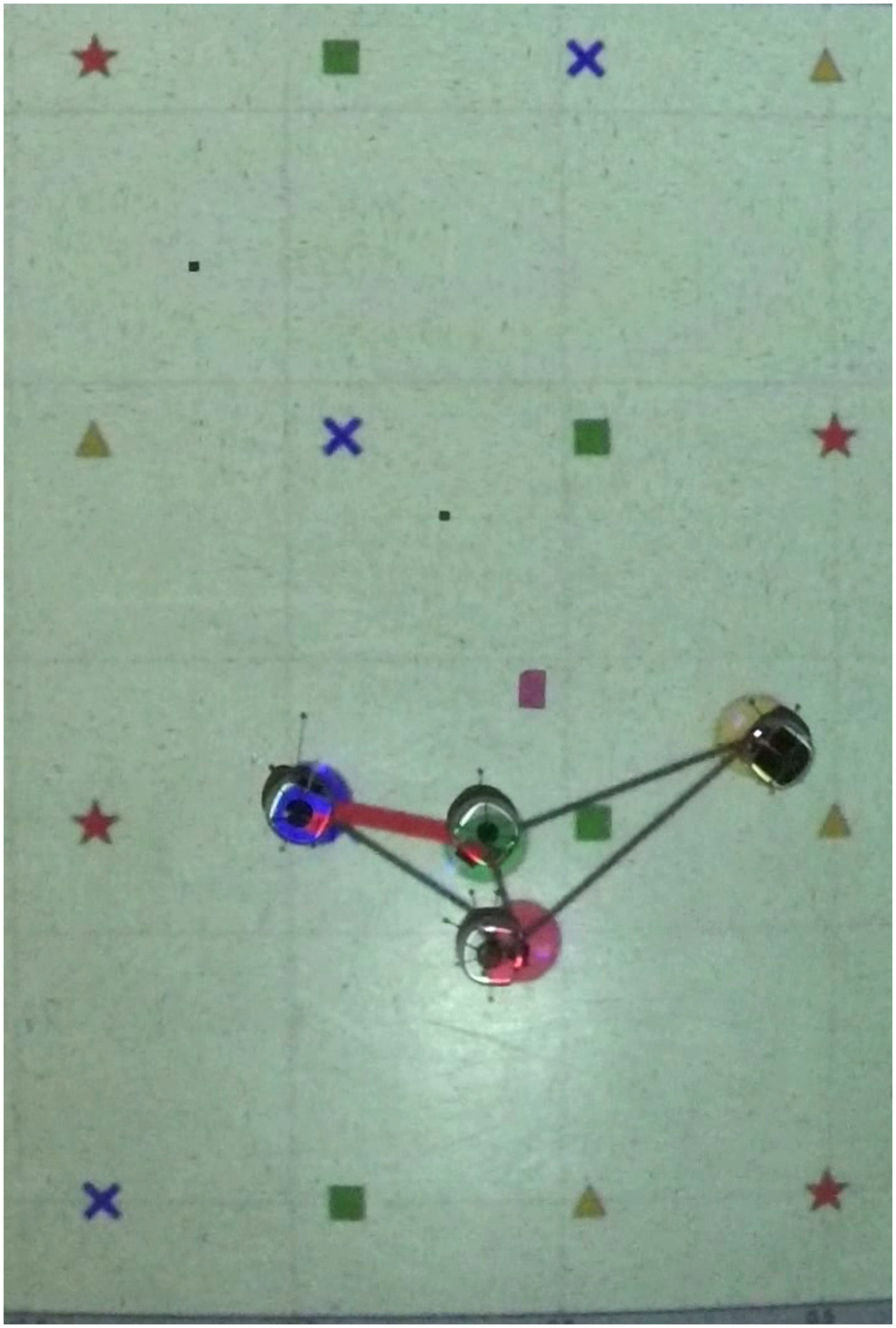}}
  \caption{Agents at 25.0s}
  \label{fig:t2ec}
\end{subfigure}
\begin{subfigure}{.23\textwidth}
  \centering
  \resizebox{1.8in}{!}{\includegraphics{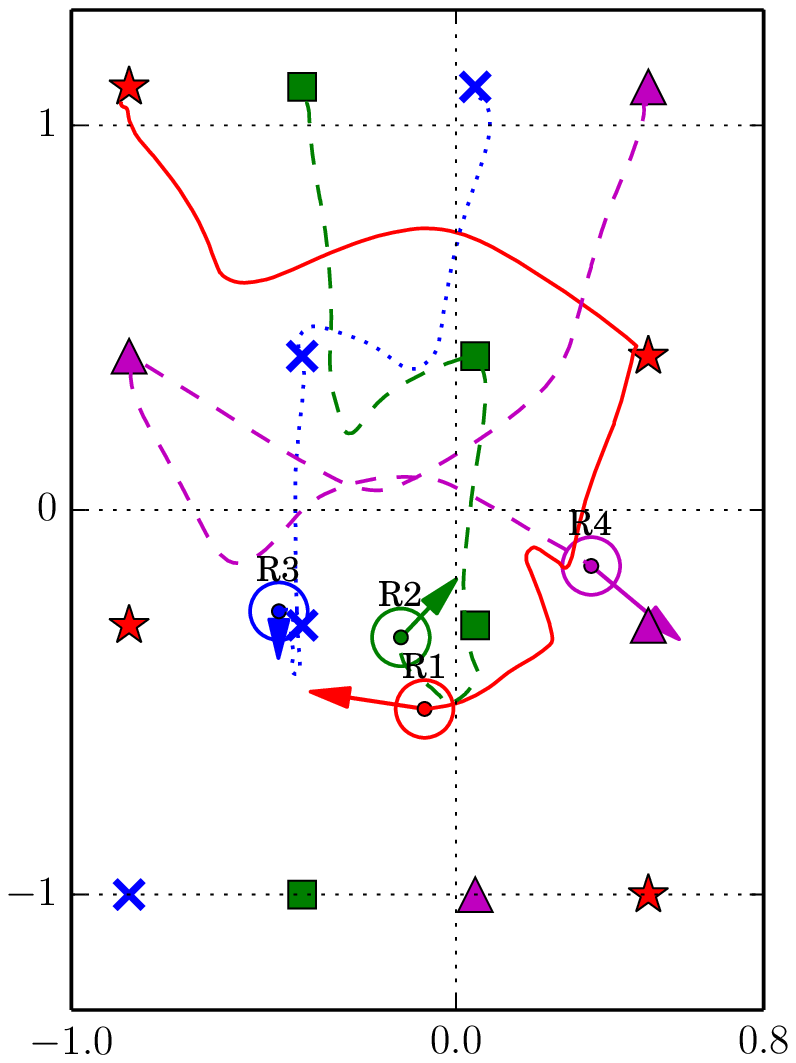}}
  \label{fig:t2pc}
\end{subfigure}
\\
\begin{subfigure}{.21\textwidth}
  \centering
  \resizebox{1.5in}{!}{\includegraphics{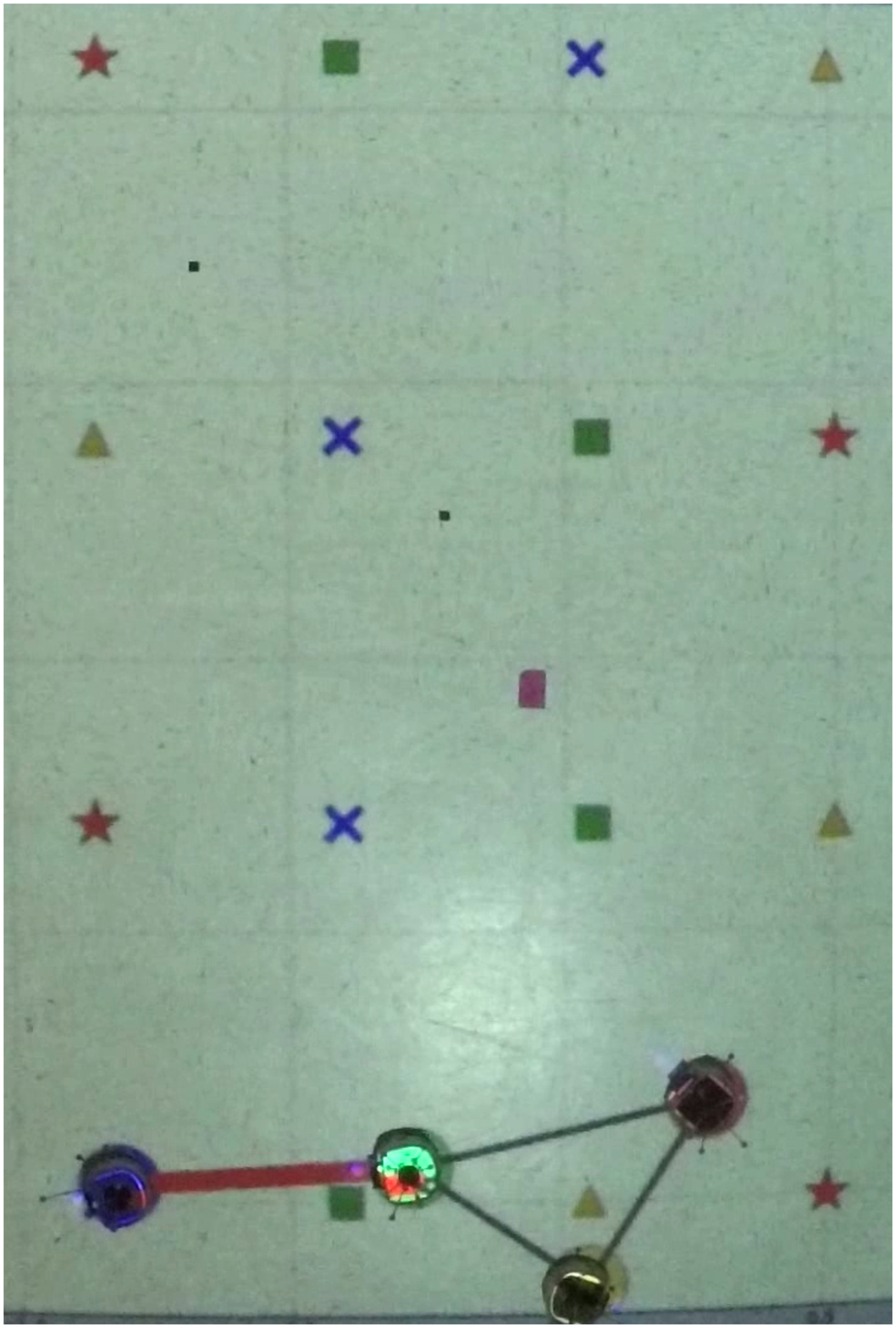}}
  \caption{Agents at 42.5s}
  \label{fig:t3ec}
\end{subfigure}%
\begin{subfigure}{.23\textwidth}
  \centering
  \resizebox{1.8in}{!}{\includegraphics{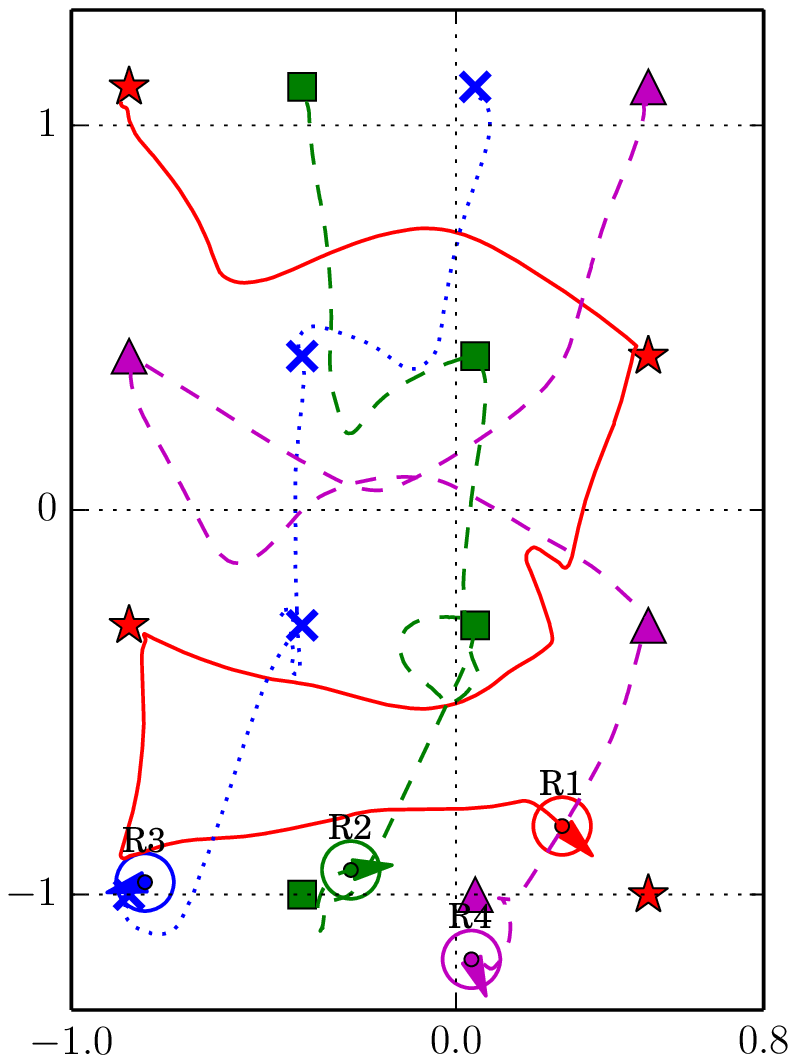}}
  \label{fig:t3pc}
\end{subfigure}%
\caption{Experiment of four mobile robots executing waypoint controllers regulated by safety and connectivity barrier certificates. The safety and connectivity distances are $D_s=0.15m$ and $D_c=0.6m$. The lines representing inter-robot connectivity are projected onto the ground using a projector.}
\label{fig:expsc}
\end{figure}


\addtolength{\textheight}{-12cm}   


\bibliographystyle{abbrv}
\bibliography{mybib}

\begin{thebibliography}{10}

\bibitem{ames2014CBF}
A.~D. Ames, J.~W. Grizzle, and P.~Tabuada.
\newblock {C}ontrol {B}arrier {F}unction {B}ased {Q}uadratic {P}rograms with
  {A}pplication to {A}daptive {C}ruise {C}ontrol.
\newblock In {\em Decision and Control (CDC), 2014 IEEE 53rd Annual Conference
  on}, pages 6271--6278, Dec 2014.

\bibitem{arkin1998behavior}
R.~C. Arkin.
\newblock {\em {B}ehavior-based {R}obotics}.
\newblock MIT press, 1998.

\bibitem{borrmann2015Swarm}
U.~Borrmann, L.~Wang, A.~D. Ames, and M.~Egerstedt.
\newblock Control barrier certificates for safe swarm behavior.
\newblock In {\em Analysis and Design of Hybrid Systems, 2015 IFAC Conference
  on}. IEEE, Oct 2015.

\bibitem{bullo2009distributed}
F.~Bullo, J.~Cort{\'e}s, and S.~Martinez.
\newblock {\em {D}istributed {C}ontrol of {R}obotic {N}etworks: a
  {M}athematical {A}pproach to {M}otion {C}oordination {A}lgorithms}.
\newblock Princeton University Press, 2009.

\bibitem{cortes2002coverage}
J.~Cortes, S.~Martinez, T.~Karatas, and F.~Bullo.
\newblock {C}overage {C}ontrol for {M}obile {S}ensing {N}etworks.
\newblock In {\em Robotics and Automation, 2002. Proceedings. ICRA'02. IEEE
  International Conference on}, volume~2, pages 1327--1332. IEEE, 2002.

\bibitem{ames2014Dynamic}
S.-C. Hsu, X.~Xu, and A.~D. Ames.
\newblock {C}ontrol {B}arrier {F}unction based {Q}uadratic {P}rograms with
  {A}pplication to bipedal robotic walking.
\newblock In {\em American Control Conference, 2015. Proceedings of the 2015},
  Dec 2015.
\newblock to appear.

\bibitem{ji2007distributed}
M.~Ji and M.~B. Egerstedt.
\newblock Distributed coordination control of multi-agent systems while
  preserving connectedness.
\newblock 2007.

\bibitem{khalil1996nonlinear}
H.~K. Khalil.
\newblock {\em Nonlinear systems}.
\newblock Prentice hall, third edition, 2002.

\bibitem{kok2003multi}
J.~R. Kok, M.~T. Spaan, N.~Vlassis, et~al.
\newblock {M}ulti-robot {D}ecision {M}aking using {C}oordination {G}raphs.
\newblock In {\em Proceedings of the 11th International Conference on Advanced
  Robotics, ICAR}, volume~3, pages 1124--1129, 2003.

\bibitem{kuntz1995qualitative}
L.~Kuntz and S.~Scholtes.
\newblock {Q}ualitative {A}spects of the {L}ocal {A}pproximation of a
  {P}iecewise {D}ifferentiable {F}unction.
\newblock {\em Nonlinear Analysis: Theory, Methods \& Applications},
  25(2):197--215, 1995.

\bibitem{lawton2003decentralized}
J.~R. Lawton, R.~W. Beard, and B.~J. Young.
\newblock {A} {D}ecentralized {A}pproach to {F}ormation {M}aneuvers.
\newblock {\em Robotics and Automation, IEEE Transactions on}, 19(6):933--941,
  2003.

\bibitem{mesbahi2010graph}
M.~Mesbahi and M.~Egerstedt.
\newblock {\em {G}raph {T}heoretic {M}ethods in {M}ultiagent {N}etworks}.
\newblock Princeton University Press, 2010.

\bibitem{Compose:video}
Online.
\newblock {C}omposable {S}afety and {C}onnectivity {B}arrier {C}ertificates for
  {M}ulti-{R}obot {S}ystems.
\newblock \url{https://www.youtube.com/watch?v=LXzgxCzZIsM}, 2015.

\bibitem{panagou2013multi}
D.~Panagou, D.~M. Stipanovic, and P.~G. Voulgaris.
\newblock {M}ulti-{O}bjective {C}ontrol for {M}ulti-{A}gent {S}ystems {U}sing
  {L}yapunov-{L}ike {B}arrier {F}unctions.
\newblock In {\em Decision and Control (CDC), 2013 IEEE 52nd Annual Conference
  on}, pages 1478--1483. IEEE, 2013.

\bibitem{park2001obstacle}
M.~G. Park, J.~H. Jeon, and M.~C. Lee.
\newblock Obstacle avoidance for mobile robots using artificial potential field
  approach with simulated annealing.
\newblock In {\em Industrial Electronics, 2001. Proceedings. ISIE 2001. IEEE
  International Symposium on}, volume~3, pages 1530--1535. IEEE, 2001.

\bibitem{prajna2007framework}
S.~Prajna, A.~Jadbabaie, and G.~J. Pappas.
\newblock {A} {F}ramework for {W}orst-case and {S}tochastic {S}afety
  {V}erification {U}sing {B}arrier {C}ertificates.
\newblock {\em Automatic Control, IEEE Transactions on}, 52(8):1415--1428,
  2007.

\bibitem{roumeliotis2000small}
S.~I. Roumeliotis and M.~J. Mataric.
\newblock ``{S}mall-{W}orld" {N}etworks of {M}obile {R}obots.
\newblock In {\em AAAI/IAAI}, page 1093, 2000.

\bibitem{scholtes2012introduction}
S.~Scholtes.
\newblock {\em {I}ntroduction to {P}iecewise {D}ifferentiable {E}quations}.
\newblock Springer Science \& Business Media, 2012.

\bibitem{sloth2012compositional}
C.~Sloth, G.~J. Pappas, and R.~Wisniewski.
\newblock {C}ompositional {S}afety {A}nalysis using {B}arrier {C}ertificates.
\newblock In {\em Proceedings of the 15th ACM international conference on
  Hybrid Systems: Computation and Control}, pages 15--24. ACM, 2012.

\bibitem{tee2009barrier}
K.~P. Tee, S.~S. Ge, and E.~H. Tay.
\newblock {B}arrier {L}yapunov {F}unctions for the {C}ontrol of
  {O}utput-{C}onstrained {N}onlinear {S}ystems.
\newblock {\em Automatica}, 45(4):918--927, 2009.

\bibitem{thibodeau2004cascaded}
B.~J. Thibodeau, S.~W. Hart, D.~R. Karuppiah, J.~D. Sweeney, and O.~Brock.
\newblock {C}ascaded {F}ilter {A}pproach to {M}ulti-objective {C}ontrol.
\newblock In {\em Robotics and Automation, 2004. Proceedings. ICRA'04. 2004
  IEEE International Conference on}, volume~4, pages 3877--3882. IEEE, 2004.

\bibitem{tomlin1998conflict}
C.~Tomlin, G.~J. Pappas, and S.~Sastry.
\newblock {C}onflict {R}esolution for {A}ir {T}raffic {M}anagement: {A} {S}tudy
  in {M}ultiagent {H}ybrid {S}ystems.
\newblock {\em Automatic Control, IEEE Transactions on}, 43(4):509--521, 1998.

\bibitem{wang2016hetero}
L.~Wang, A.~D. Ames, and M.~Egerstedt.
\newblock {S}afety {B}arrier {C}ertificates for {H}eterogeneous {M}ulti-robot
  {S}ystems.
\newblock In {\em American Control Conference}, 2016, to appear.

\bibitem{Xu2015Robustness}
X.~Xu, P.~Tabuada, J.~W. Grizzle, and A.~D. Ames.
\newblock Robustness of control barrier functions for safety critical contro.
\newblock In {\em Analysis and Design of Hybrid Systems, 2015 IFAC Conference
  on}. IEEE, Oct 2015.

\bibitem{zavlanos2011graph}
B.~M.~M. Zavlanos, M.~B. Egerstedt, and G.~J. Pappas.
\newblock Graph-theoretic connectivity control of mobile robot networks.
\newblock {\em Proceedings of the IEEE}, 99(9):1525--1540, 2011.

\bibitem{zavlanos2009hybrid}
M.~M. Zavlanos, H.~G. Tanner, A.~Jadbabaie, and G.~J. Pappas.
\newblock {H}ybrid {C}ontrol for {C}onnectivity {P}reserving {F}locking.
\newblock {\em Automatic Control, IEEE Transactions on}, 54(12):2869--2875,
  2009.

\end{thebibliography}
\end{document}